\newtheorem{thm}{Theorem}
\newtheorem*{thm*}{Theorem}
\newtheorem{defn}{Definition}
\tikzset{every path/.append style={thick}}
\title{Two Algorithms for Finding $k$ Shortest Paths of a Weighted Pushdown Automaton}
\author{WU Ke\\
  Department of Computer Science\\
  and CLIP Lab at the Institute for\\
  Advanced Computer Studies\\
  University of Maryland\\
  College Park, MD 20742, USA\\
\and
  Philip RESNIK\\
  Department of Linguistics\\
  and CLIP Lab at the Institute for\\
  Advanced Computer Studies\\
  University of Maryland\\
  College Park, MD 20742, USA\\
}
\begin{document}

\maketitle{}

\section{Introduction}
\label{sec:intro}

Weighted pushdown automata (WPDAs) have recently been adopted in some
applications such as machine translation
\citep{iglesias_hierarchical_2011} as a more compact alternative to
weighted finite-state automata (WFSAs) for representing a weighted set
of strings. \citet{allauzen_pushdown_2012} introduce a set of basic
algorithms for construction and inference of WPDAs, and the
corresponding implementation as an extension of the open source
finite-state transducer toolkit
OpenFst.\footnote{http://www.openfst.org/twiki/bin/view/FST/FstExtensions}

Although a shortest-path algorithm for WPDAs with bounded stack is
described in \citet{allauzen_pushdown_2012}, it does not give a
$k$-shortest-path algorithm, which finds the $k$ shortest accepting
paths of the given automaton. Other than just the single shortest
path, $k$ shortest paths are useful for many purposes such as
reranking the output in parsing \citep{collins_discriminative_2005} or
tuning feature weights in machine translation
\citep{chiang_11001_2009}. One existing work-around is to first expand
the WPDA into an equivalent WFSA and then find the $k$ shortest paths
of the WFSA using the $k$-shortest-path algorithm for WFSAs (the
\emph{expansion approach}). Since the WPDA expansion has an
exponential time and space complexity with respect to the size of the
automaton, one usually has to prune the WPDA before expansion (the
\emph{pruned expansion approach}), i.e. remove those transitions and
states that are not on any accepting path with a weight at most a
given threshold greater than the shortest distance. However, setting
an adequate threshold that neither prunes nor keeps too many states or
transitions a priori is almost impossible in practice.

In this paper, we introduce two efficient algorithms for finding the
$k$ shortest paths of a WPDA, both derived from the same weighted
deductive logic description of the execution of a WPDA using different
search strategies.

\section{Weighted pushdown automata}
\label{sec:pushdown}

\subsection{Formal definitions}
\label{sec:def}
% TODO: merge this with the logic?

Following \citet{allauzen_pushdown_2012}, we represent a WPDA as
directed graph with labeled and weighted arcs (transitions).

\begin{defn}
  A WPDA $M$ over a semiring $\langle \mathbb{K}, \oplus,
  \otimes, \overline{0}, \overline{1} \rangle$ is a tuple $\langle
  \Sigma, \Pi, \hat{\Pi}, Q, E, s, f \rangle$, where
  \begin{compactitem}
  \item $\Sigma$, $\Pi$ and $\hat{\Pi}$ are disjoint finite sets of
    symbols;
  \item $\Sigma$ is the alphabet of input symbols;
  \item $\Pi$ and $\hat{\Pi}$ are the alphabets of respectively
    opening and closing parentheses; there exists a bijection between
    them that pairs the parentheses; for any $a \in \Pi \cup
    \hat{\Pi}$, we represent its counterpart in the other alphabet as
    $\hat{a}$;
  \item $Q$ is a finite set of states; $s \in Q$ is the start state and
    $f \in Q$ is the final state;
  \item $E \subseteq Q \times (\Sigma \cup \Pi \cup \hat{\Pi} \cup \{
    \epsilon \}) \times \mathbb{K} \times Q$ is a finite set of
    transitions; $e = \langle p[e], i[e], w[e], n[e] \rangle \in E$
    denotes a transition from state $p[e]$ to state $n[e]$ with label
    $i[e]$ and weight $w[e]$, where $w[e] \neq \overline{0}$.
  \end{compactitem}
\end{defn}

A path $\pi$ is a sequence of transitions $\pi = e_1 e_2 \ldots e_m$,
such that $n[e_i] = p[e_{i+1}]$ for all $1 \leq i < m$. $p[\cdot]$,
$i[\cdot]$, $w[\cdot]$ and $n[\cdot]$ can all be generalized to
paths. For a given path $\pi = e_1 e_2 \ldots e_m$, define $p[\pi] =
p[e_1]$, $n[\pi] = n[e_m]$, $i[\pi] = i[e_1]i[e_2] \ldots i[e_m]$, and
$w[\pi] = w[e_1] \otimes w[e_2] \otimes \ldots \otimes w[e_m]$. Unlike
a WFSA, not all paths from $s$ to $f$ in a WPDA are accepting
paths. For a set of symbols $S$, let $c_S[\pi]$ be the substring of
$i[\pi]$ consisting of all and only the symbols from set $S$. For
example, $c_{\Pi \cup \hat{\Pi}}[\pi]$ is the substring of $i[\pi]$
consisting of all and only the opening and closing parentheses. Then,
\begin{defn}
  The Dyck language on finite parenthesis alphabets $\Pi$ and
  $\hat{\Pi}$ consists of strings of balanced parentheses. A path
  $\pi$ is balanced if $c_{\Pi \cup \hat{\Pi}}[\pi]$ belongs to
  the Dyck language on $\Pi$ and $\hat{\Pi}$.
\end{defn}
For example, when $\Pi = \{ \text{`('}, \text{`['} \}$ and
$\hat{\Pi} = \{ \text{`)'}, \text{`]'} \}$ with normal pairing by
appearance, strings such as $()$, $([()])[]$ are members of the Dyck
language while $($ or $(][)$ are not.

Finally,
\begin{defn}
  A path $\pi$ is an accepting path if and only if $p[\pi] = s$,
  $n[\pi] = f$ and $\pi$ is balanced.
\end{defn}

This representation of WPDAs is slightly different from the classical
representation of PDAs, where a stack alphabet is defined with
optional push or pop operations at each transition. Here the stack
alphabet is essentially $\Pi$ and $\hat{\Pi}$, paired by the bijection
between them. Whenever a symbol from $\Pi$ is consumed, it is
equivalent to pushing the particular symbol onto the stack in the
classical representation; and whenever a symbol from $\hat{\Pi}$ is
consumed, it is equivalent to popping a symbol off the stack and
checking if the symbol is its counterpart from $\Pi$. As discussed in
\citet{allauzen_pushdown_2012}, such representation leads to easy
adaptation of some WFSA algorithms for similar purposes on a WPDA.

Following \citet{allauzen_pushdown_2012}, we limit our effort in
finding $k$ shortest paths to WPDAs with a bounded stack in both
pushing and popping.\footnote{This definition is slightly different from
  \citet{allauzen_pushdown_2012}, which only bounds pushing.}
\begin{defn}
  A WPDA has a bounded stack if there exists an integer $K$ such that
  for any path $\pi$, the number of unmatched parenthesis in
  $c_{\Pi}[\pi]$ is no greater than $K$.
\end{defn}
Although this rules out all WPDAs with recursion, the ones found in
applications that need to find the $k$ shortest paths usually do not
have recursion \citep{iglesias_hierarchical_2011}. Thus an algorithm
that only works on WPDAs with a bounded stack is already very useful.

\begin{figure}
  \centering
  \begin{tikzpicture}[node distance=2cm,bend angle=15]
    \node[state,initial,initial text={}](s1){$q_1$};
    \node[state](s2)[above of=s1]{$q_2$};
    \node[state](s3)[right of=s1]{$q_3$};
    \node[state,accepting](s4)[above of=s3]{$q_4$};
    \path[->]
    (s1) edge [bend left] node[auto] {(:$\overline{1}$} (s2)
         edge [bend left] node[auto] {b:$\overline{1}$} (s3)
    (s2) edge [bend left] node[auto] {a:$\overline{1}$} (s1)
    (s3) edge [bend left] node[auto] {):$\overline{1}$} (s4)
    (s4) edge [bend left] node[auto] {b:$\overline{1}$} (s3);
  \end{tikzpicture}
  \caption{A WPDA of $\{a^nb^n | n > 0 \}$}
  \label{fig:pda}
\end{figure}

\begin{figure}
  \centering
  \begin{tikzpicture}[node distance=2cm,bend angle=15]
    \node[state,initial,initial text={}](s1){$s_1$};
    \node[state](s2)[right of=s1]{$s_2$};
    \node[state](s3)[right of=s2]{$s_3$};
    \node[state](s4)[right of=s3]{$s_4$};
    \node[state,accepting](s5)[right of=s4]{$s_5$};
    \path[->]
    (s1) edge node[auto] {a:$\overline{1}$} (s2)
    (s2) edge node[auto] {a:$\overline{1}$} (s3)
    (s3) edge node[auto] {b:$\overline{1}$} (s4)
    (s4) edge node[auto] {b:$\overline{1}$} (s5);
  \end{tikzpicture}
  \caption{Input string ``aabb'' encoded as a WFSA}
  \label{fig:fsa}
\end{figure}

\begin{figure}
  \centering
  \begin{tikzpicture}[node distance=2.5cm,bend angle=15]
    \node[state,initial,initial text={}](s1){$\langle q_1, s_1 \rangle$};
    \node[state](s2)[below of=s1]{$\langle q_2, s_1 \rangle$};
    \node[state](s3)[above right of=s2]{$\langle q_1, s_2 \rangle$};
    \node[state](s4)[below of=s3]{$\langle q_2, s_2 \rangle$};
    \node[state](s5)[above right of=s4]{$\langle q_1, s_3 \rangle$};
    \node[state](s6)[below right of=s5]{$\langle q_3, s_4 \rangle$};
    \node[state](s7)[above of=s6]{$\langle q_4, s_4 \rangle$};
    \node[state](s8)[below right of=s7]{$\langle q_3, s_5 \rangle$};
    \node[state,accepting](s9)[above of=s8]{$\langle q_4, s_5 \rangle$};
    \path[->]
    (s1) edge node[below,sloped] {(:$\overline{1}$} (s2)
    (s2) edge node[above,sloped] {a:$\overline{1}$} (s3)
    (s3) edge node[below,sloped] {(:$\overline{1}$} (s4)
    (s4) edge node[above,sloped] {a:$\overline{1}$} (s5)
    (s5) edge node[above,sloped] {b:$\overline{1}$} (s6)
    (s6) edge node[below,sloped] {):$\overline{1}$} (s7)
    (s7) edge node[above,sloped] {b:$\overline{1}$} (s8)
    (s8) edge node[below,sloped] {):$\overline{1}$} (s9);
  \end{tikzpicture}
  \caption{The result of intersection}
  \label{fig:fsa-pda}
\end{figure}

\citet{allauzen_pushdown_2012} give a general algorithm for converting
a context free grammar into an equivalent WPDA. Figure \ref{fig:pda}
is an example WPDA representing the classical context free language
$\{a^nb^n | n > 0\}$ constructed following their algorithm. It is easy
to see this WPDA does not have a bounded stack. However, considering
this as a ``grammar'', one can then ``parse'' strings with the grammar
by encoding the input as a WFSA and intersecting the WPDA with it. For
example, Figure \ref{fig:fsa-pda} is the result of intersecting Figure
\ref{fig:fsa} with Figure \ref{fig:pda}, which now has a bounded
stack.

\subsection{Automata execution as weighted deduction}
\label{sec:deduction}

A deductive logic defines a space of weighted items, some of which are
axioms or goals (items to prove), and a set of inference rules of the
form,
\begin{center}
  \begin{prooftree}
    \AxiomC{$A_1:w_1$}
    \AxiomC{$A_2:w_2$}
    \AxiomC{$\ldots$}
    \AxiomC{$A_m:w_m$}
    \RightLabel{\quad $\phi$}
    \QuaternaryInfC{$B:g(w_1, w_2, \ldots, w_m)$}
  \end{prooftree}
\end{center}
which means if items $A_1, A_2, \ldots, A_m$ are provable
respectively with weights $w_1, w_2, \ldots, w_m$, then item $B$ is
also provable with weight $g(w_1, w_2, \ldots, w_m)$ given the side
condition $\phi$ is satisfied. We also call $B$ proved this way
\emph{an instantiation of $B$ with weight $g(w_1, w_2, \ldots,
  w_m)$}. This style of system has been commonly used to express
parsing strategies since \citet{shieber_principles_1995}.

The execution of a WPDA $M$ can be described using the following
weighted deductive logic $\mathcal{L}_M$.

\begin{compactitem}
\item The items are of the form $q_1 \leadsto q_2$, where $q_1, q_2
  \in Q$. An instantiation $q_1 \leadsto q_2 : u$ for some $u \in
  \mathbb{K}$ intuitively means there is a balanced path from $q_1$ to
  $q_2$ with weight $u$.
\item Axioms are
  \begin{prooftree}
    \AxiomC{}
    \RightLabel{\quad $q = s$ or there exists $e \in E$ such that $n[e] = q$ and $i[e] \in \Pi$}
    \UnaryInfC{$q \leadsto q : \overline{1}$}
  \end{prooftree}
  Furthermore, we call any state $q$ an \emph{entering state} if $q
  \leadsto q : \overline{1}$ is an axiom .
\item There are two inference rules,
  \begin{compactenum}
  \item Scan
    \begin{prooftree}
      \AxiomC{$q \leadsto p[e] : u$}
      \RightLabel{\quad $e \in E$ such that $i[e] \in \Sigma \cup \{ \epsilon \}$}
      \UnaryInfC{$q \leadsto n[e] : u \otimes w[e]$}
    \end{prooftree}
  \item Complete
    \begin{prooftree}
      \AxiomC{$q \leadsto p[e_1] : u_1$}
      \AxiomC{$n[e_1] \leadsto p[e_2] : u_2$}
      \RightLabel{\quad $e_1, e_2 \in E$ such that $i[e_1] \in \Pi, i[e_2] \in \hat{\Pi}, i[e_1] = \hat{i}[e_2]$}
      \BinaryInfC{$q \leadsto n[e_2] : u_1 \otimes w[e_1] \otimes u_2 \otimes w[e_2]$}
    \end{prooftree}
  \end{compactenum}
\item The only goal item is
  \[s \leadsto f\]
\end{compactitem}

Any valid proof forms a tree that induces a path. The induced path can
be obtained by reading off the transitions in side conditions through
a left-to-right post-order traversal of the proof tree. Take the WPDA
in Figure \ref{fig:pda} for example; Figure \ref{fig:proof} is a proof
of the accepting path of the string ``aabb''. The accepting path is
thus $(7)(1)(4)(2)(3)(5)(6)(8)$, i.e. $q_1 \xrightarrow{(} q_2
\xrightarrow{a} q_1 \xrightarrow{(} q_2 \xrightarrow{a} q_1
\xrightarrow{b} q_3 \xrightarrow{)} q_4 \xrightarrow{b} q_3
\xrightarrow{)} q_4$.

\begin{figure}
  \centering
\begin{prooftree}
    \AxiomC{}
  \UnaryInfC{$q_1 \leadsto q_1 : \overline{1}$}
          \AxiomC{}
        \UnaryInfC{$q_2 \leadsto q_2 : \overline{1}$}
        \RightLabel{$q_2 \xrightarrow{a} q_1 : \overline{1} \ (1)$}
      \UnaryInfC{$q_2 \leadsto q_1 : \overline{1}$}
            \AxiomC{}
          \UnaryInfC{$q_2 \leadsto q_2 : \overline{1}$}
          \RightLabel{$q_2 \xrightarrow{a} q_1 : \overline{1} \ (2)$}
        \UnaryInfC{$q_2 \leadsto q_1 : \overline{1}$}
        \RightLabel{$q_1 \xrightarrow{b} q_3 : \overline{1} \ (3)$}
      \UnaryInfC{$q_2 \leadsto q_3 : \overline{1}$}
      \RightLabel{$q_1 \xrightarrow{(} q_2 : \overline{1} \ (4)$, $q_3 \xrightarrow{)} q_4 : \overline{1} \ (5)$}
    \BinaryInfC{$q_2 \leadsto q_4 : \overline{1}$}
    \RightLabel{$q_4 \xrightarrow{b} q_3 : \overline{1} \ (6)$}
  \UnaryInfC{$q_2 \leadsto q_3 : \overline{1}$}
  \RightLabel{$q_1 \xrightarrow{(} q_2 : \overline{1} \ (7)$, $q_3 \xrightarrow{)} q_4 : \overline{1} \ (8)$}
\BinaryInfC{$q_1 \leadsto q_4 : \overline{1}$}
\end{prooftree}
  \caption{A proof of the accepting path of ``aabb''}
  \label{fig:proof}
\end{figure}

One can easily prove the following by induction for any WPDA $M$ (see
the appendix),\footnote{Note especially that a bounded stack is not
  required.}

\begin{thm}[Soundness]
  Any valid proof of an instantiation $q_1 \leadsto q_2 : u$ in $\mathcal{L}_M$
  induces a balanced path from $q_1$ to $q_2$ with weight $u$ in $M$.
\end{thm}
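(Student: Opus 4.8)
The plan is to prove soundness by structural induction on the proof tree in $\mathcal{L}_M$. The statement to establish is that whenever $q_1 \leadsto q_2 : u$ is provable, there is a balanced path $\pi$ from $q_1$ to $q_2$ with $w[\pi] = u$. Since the derivation is built up from axioms via the Scan and Complete rules, induction on the height (or size) of the proof tree is the natural vehicle, with one base case for the axioms and one inductive case for each of the two inference rules.

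For the base case, I would consider an instantiation $q \leadsto q : \overline{1}$ arising from an axiom. Here the induced path is the empty path $\pi$ from $q$ to $q$. I need to check three things: that $p[\pi] = q$ and $n[\pi] = q$, which hold by the convention for the empty path; that $w[\pi] = \overline{1}$, which holds because the empty product of weights is the multiplicative identity of the semiring; and that $\pi$ is balanced, which holds because $c_{\Pi \cup \hat{\Pi}}[\pi]$ is the empty string, and the empty string is in the Dyck language.

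For the inductive step I would treat the two rules separately. In the \emph{Scan} case, the premise $q \leadsto p[e] : u$ gives, by the induction hypothesis, a balanced path $\pi'$ from $q$ to $p[e]$ with $w[\pi'] = u$; the side condition supplies a transition $e$ with $i[e] \in \Sigma \cup \{\epsilon\}$. I append $e$ to $\pi'$ to form $\pi = \pi' e$. Since $n[\pi'] = p[e]$, this is a legitimate path, with $p[\pi] = q$ and $n[\pi] = n[e]$, and $w[\pi] = u \otimes w[e]$ as required by the rule's weight function. Crucially, $i[e]$ contributes no parenthesis, so $c_{\Pi \cup \hat{\Pi}}[\pi] = c_{\Pi \cup \hat{\Pi}}[\pi']$ remains in the Dyck language, and $\pi$ stays balanced. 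In the \emph{Complete} case, the two premises yield balanced paths $\pi_1$ from $q$ to $p[e_1]$ and $\pi_2$ from $n[e_1]$ to $p[e_2]$, with weights $u_1$ and $u_2$; the side conditions supply paired transitions $e_1, e_2$ with $i[e_1] \in \Pi$, $i[e_2] \in \hat{\Pi}$, and $i[e_1] = \hat{i}[e_2]$. I form $\pi = \pi_1\, e_1\, \pi_2\, e_2$, which is a valid path because the endpoints chain correctly, with $p[\pi] = q$, $n[\pi] = n[e_2]$, and $w[\pi] = u_1 \otimes w[e_1] \otimes u_2 \otimes w[e_2]$ by associativity of $\otimes$.

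The main obstacle is verifying balancedness in the Complete case, since this is the only step where parentheses are actually introduced and it is where the Dyck structure must be checked carefully. The key observation is that $c_{\Pi \cup \hat{\Pi}}[\pi]$ equals the opening symbol $i[e_1]$, followed by the balanced string $c_{\Pi \cup \hat{\Pi}}[\pi_2]$, followed by its matching closing symbol $i[e_2] = \hat{i}[e_1]$, wrapped around and concatenated after the already-balanced $c_{\Pi \cup \hat{\Pi}}[\pi_1]$. I would invoke the closure property of the Dyck language: if $w_1$ and $w_2$ are Dyck words and $a \in \Pi$ with matching $\hat{a} \in \hat{\Pi}$, then $w_1\, a\, w_2\, \hat{a}$ is again a Dyck word. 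Applying this with $w_1 = c_{\Pi \cup \hat{\Pi}}[\pi_1]$, $a = i[e_1]$, $w_2 = c_{\Pi \cup \hat{\Pi}}[\pi_2]$, and $\hat{a} = i[e_2]$ gives that $\pi$ is balanced, completing the induction.
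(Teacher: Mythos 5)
Your proposal is correct and follows essentially the same route as the paper's own proof: induction on the structure (size) of the proof tree, with the empty path for the axiom base case and the two inference rules handled separately. In fact you spell out the one point the paper leaves implicit---that balancedness in the Complete case follows from the Dyck-language closure property $w_1\, a\, w_2\, \hat{a}$---which the paper dismisses with ``similar to the Scan rule case,'' so your write-up is, if anything, slightly more complete.
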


\begin{thm}[Completeness]
  Any balanced path from an entering state $q_1$ to some state $q_2$
  with weight $u$ in $M$ has a valid proof of an instantiation $q_1
  \leadsto q_2 : u$ in $\mathcal{L}_M$ whose induced path is that
  path.
\end{thm}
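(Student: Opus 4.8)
The plan is to prove the statement by \emph{strong induction on the length} $m$ of the balanced path $\pi = e_1 \cdots e_m$ (its number of transitions), where the induction hypothesis asserts the claim for every balanced path of length strictly less than $m$ that begins at an entering state. Inducting on length (rather than, say, stack depth) is natural because both inference rules strictly consume transitions, so every sub-path appearing in a premise is shorter; and the hypothesis must be quantified over \emph{entering} start states because the Complete case will force an appeal to the hypothesis on a sub-path that begins at the target of a push transition, which is exactly the second kind of entering state. For the base case $m = 0$ the path is empty, so $q_2 = q_1$ and $w[\pi] = \overline{1}$; since $q_1$ is entering, $q_1 \leadsto q_1 : \overline{1}$ is an axiom, and its induced path is empty, as required.

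For the inductive step I would case on the label $i[e_m]$ of the \emph{last} transition. First observe that $i[e_m] \notin \Pi$: if the final parenthesis symbol of $\pi$ were an opening one it could never be matched, contradicting balancedness, so the last transition is either a scan/$\epsilon$ step or a closing parenthesis. If $i[e_m] \in \Sigma \cup \{\epsilon\}$, then $\pi = \pi' e_m$ with $\pi'$ a balanced path from the entering state $q_1$ to $p[e_m]$ (it has the same parenthesis sub-word $c_{\Pi \cup \hat{\Pi}}[\pi]$ as $\pi$). The induction hypothesis yields a proof of $q_1 \leadsto p[e_m] : w[\pi']$ with induced path $\pi'$, and one application of Scan with $e_m$ produces a proof of $q_1 \leadsto n[e_m] : w[\pi'] \otimes w[e_m]$ whose induced path is $\pi' e_m = \pi$, matching $q_2 = n[e_m]$ and $u = w[\pi]$.

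The substantive case is $i[e_m] \in \hat{\Pi}$, where the key combinatorial step is to invoke the Dyck structure of $c_{\Pi \cup \hat{\Pi}}[\pi]$: the final closing parenthesis contributed by $e_m$ is matched by a unique opening parenthesis contributed by some transition $e_j$ with $i[e_j] \in \Pi$ and $i[e_j] = \hat{i}[e_m]$. This gives the decomposition $\pi = \pi_1 \, e_j \, \pi_2 \, e_m$, where $\pi_1 = e_1 \cdots e_{j-1}$ and $\pi_2 = e_{j+1} \cdots e_{m-1}$. I would then check that (i) $\pi_2$ is balanced, being the block strictly enclosed by the matched pair, and $\pi_1$ is balanced, being what remains of the Dyck word after removing that entire trailing matched-pair-with-block; and (ii) $\pi_2$ begins at $n[e_j]$, which is an \emph{entering} state precisely because $e_j$ is a push into it. Applying the hypothesis to $\pi_1$ (from $q_1$ to $p[e_j]$) and to $\pi_2$ (from the entering state $n[e_j]$ to $p[e_m]$) gives proofs of $q_1 \leadsto p[e_j]$ and $n[e_j] \leadsto p[e_m]$ with induced paths $\pi_1$ and $\pi_2$; one application of Complete with the pair $(e_j, e_m)$ then yields $q_1 \leadsto n[e_m]$ with weight $w[\pi_1] \otimes w[e_j] \otimes w[\pi_2] \otimes w[e_m]$, which equals $w[\pi]$ by associativity of $\otimes$, and with induced path $\pi_1 e_j \pi_2 e_m = \pi$.

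I expect the main obstacle to be this Complete case, and within it the clean identification of the matching opening transition $e_j$ together with the verification that both $\pi_1$ and $\pi_2$ are balanced; isolating and applying the correct Dyck-matching lemma, and confirming that the induced-path convention genuinely places $e_j$ and $e_m$ \emph{around} the two sub-proofs (as exhibited in Figure \ref{fig:proof}), is where the care lies. A secondary but essential point, easy to overlook, is that the start state of each recursive sub-path must remain entering: this is immediate for $\pi_1$, but for $\pi_2$ it holds only by the ``target of a push'' clause in the axiom schema, which is exactly what makes the induction close.
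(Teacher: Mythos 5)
Your proposal is correct and follows essentially the same route as the paper's own proof: induction on path length, casing on the label of the last transition, with the closing-parenthesis case handled by locating the matching opening transition and splitting the path into two balanced sub-paths fed to the Complete rule. You actually spell out two points the paper leaves implicit --- that $n[e_j]$ is an entering state by the push clause of the axiom schema, and that the induced-path convention interleaves $e_j$ and $e_m$ around the sub-proofs correctly --- so no changes are needed.
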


\begin{thm}[In-ambiguity]
  Any balanced path from an entering state in $M$ has a unique proof
  in $\mathcal{L}_M$.\footnote{Up to the tree structure with side
    conditions.}
\end{thm}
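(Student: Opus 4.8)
The plan is to argue by strong induction on the length $m$ of the balanced path $\pi = e_1 e_2 \cdots e_m$ running from an entering state $q_1$ to a state $q_2$, showing at each step that the final inference at the root of \emph{any} proof of $q_1 \leadsto q_2$ whose induced path is $\pi$ is forced, so that the problem reduces to uniquely determined, strictly shorter balanced subpaths, each again starting at an entering state. Recalling that the induced path is read off by a left-to-right post-order traversal, an axiom contributes no transition (so it induces the empty path), a Scan rooted at a transition $e$ induces a path ending in $e$, and a Complete rooted at a matching pair $\langle e', e'' \rangle$ induces a path of the form $\pi_L\, e'\, \pi_R\, e''$. Hence the shape of the root is dictated by the last transition $e_m$ of $\pi$.

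First I would dispatch the determinate cases. If $m = 0$ then $\pi$ is empty, so the root can only be an axiom; since the sole axiom available at $q_1$ is $q_1 \leadsto q_1 : \overline{1}$ (valid precisely because $q_1$ is entering), the proof is unique. If $m > 0$ and $i[e_m] \in \Sigma \cup \{\epsilon\}$, the root cannot be an axiom (the path is nonempty) nor a Complete (whose induced path ends in a closing parenthesis), so it must be a Scan, and its transition is forced to equal $e_m$. Its single premise then has induced path $\pi' = e_1 \cdots e_{m-1}$, which is balanced (deleting a non-parenthesis transition leaves $c_{\Pi \cup \hat\Pi}[\pi]$ unchanged) and still starts at the entering state $q_1$, so the inductive hypothesis applies.

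The crux is the remaining case $i[e_m] \in \hat\Pi$; note that $i[e_m] \in \Pi$ is impossible, since a balanced parenthesis word cannot end in an opening symbol, so together with the previous paragraph the cases are exhaustive. Here the root must be a Complete whose closing transition is $e_m$, and the only freedom is the choice of the opening transition $e'$ at which $\pi$ splits as $\pi_L\, e'\, \pi_R\, e_m$ with $\pi_L$ and $\pi_R$ balanced (balancedness of the premises' induced paths is guaranteed by Soundness). The key combinatorial lemma I would prove is that a balanced parenthesis word ending in a closing symbol has a \emph{unique} opening symbol whose removal, together with the final close, leaves two balanced pieces, namely its Dyck-matching partner: writing the word as $L\,(\,R\,)$ with $L$ and $R$ balanced, a prefix-count argument shows the displayed open must match the final close, and the Dyck match is unique. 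Transporting this to transitions (parentheses occupy exactly the positions of the $\Pi \cup \hat\Pi$-labeled transitions), the opening transition $e'$ is forced, hence so are $\pi_L$ and $\pi_R$. Finally, $\pi_L$ starts at the entering state $q_1$, while $\pi_R$ starts at $n[e']$, which is itself an entering state because $e'$ witnesses the axiom side condition ($i[e'] \in \Pi$ and $n[e']$ is its target); both subpaths are strictly shorter, so the inductive hypothesis yields unique proofs of the two premises and therefore a unique proof of $\pi$.

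I expect the main obstacle to be exactly this last case: making precise that ``the premises are balanced and compose to $\pi$'' forces the split at the Dyck-matching parenthesis and nowhere else. Everything else — the base case, the Scan case, and the reduction to the inductive hypothesis — is routine once the root shape is pinned down by $e_m$; the parenthesis-matching uniqueness lemma is where the real content lives, and I would state and prove it as a standalone fact about Dyck words before invoking it.
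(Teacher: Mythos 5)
Your proof is correct and follows essentially the same route as the paper's own (sketched) argument: induction on path length, with the root inference forced by the last transition of the path --- axiom for the empty path, Scan for $\Sigma \cup \{\epsilon\}$, Complete for a closing parenthesis. The one place you go beyond the paper is in making explicit \emph{why} the Complete application is unique --- the Dyck-matching lemma forcing the split $\pi_L\, e'\, \pi_R\, e_m$ at the unique matching opening transition --- which is precisely the step the paper's sketch asserts without proof, so your added detail is a genuine (and welcome) strengthening of the same argument rather than a different approach.
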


The three properties together essentially state that there is a
one-to-one correspondence between proofs of goal items in
$\mathcal{L}_M$ and accepting paths in $M$.

\subsection{The $k$-shortest-path problem}
\label{sec:problem}

The $k$-shortest-path problem on a WPDA $M$ with a bounded stack is to
find $k$ accepting paths from $M$ with the smallest weights with
respect to the natural ordering of $M$'s weight semiring
$\mathbb{K}$.\footnote{In the rest of this paper, we always assume the
  WPDA $M$ has a bounded stack.}

The natural ordering $\leq \subseteq \mathbb{K} \times \mathbb{K}$ is
defined as
\begin{defn}
  For any $a, b \in \mathbb{K}$, $a \leq b$ if and only if $a \oplus b
  = a$.
\end{defn}

For the problem to be well-defined, the natural ordering also has to
be total, which is equivalent to requiring the $\oplus$ operator to
have the following \emph{path property}: for any $a, b \in
\mathbb{K}$, $a \oplus b = a$ or $a \oplus b = b$. An example meeting
these conditions is the tropical semiring $\langle \mathbb{R} \cup \{
\infty \}, \min, +, \infty, 0 \rangle$, one of the most commonly used
as weights in parsing and machine translation. Its natural ordering is
simply the ordering of real numbers and infinity.

\section{Computing the Shortest Distance}
\label{sec:shortest}

One of the benefits of the above weighted deduction representation is
that many properties can be computed by carrying out the deductions in
a uniform style. As a starting point, we are interested in finding the
smallest-weight instantiation of some item $q_1 \leadsto q_2$. For
reasons which will become clear later, we call the weight of that
instantiation the \emph{inside weight} of $q_1 \leadsto q_2$. Let $R$
be the set of all instantiations of provable items. Because of the
path property, computing the inside weight of $q_1 \leadsto q_2$ is
equivalent to computing
\[
\alpha(q_1 \leadsto q_2) = \bigoplus_{\{u | q_1 \leadsto q_2 : u \in R\}} u
\]

The sum can be further grouped by the last step taken in a proof of
$q_1 \leadsto q_2 : u$. Define $A(q_1 \leadsto q_2)$ to be the
following,
\[
A(q_1 \leadsto q_2) = \left\{
  \begin{array}{ll}
    \overline{1} & q_1 \leadsto q_2 \text{ is an axiom} \\
    \overline{0} & \text{otherwise}
  \end{array}
  \right.
\]

Define $S_{q_1 \leadsto q_2} \subseteq E$ be the set of ``last steps
taken'' to prove $q_1 \leadsto q_2$ with a Scan, i.e. $e$ is in
$S_{q_1 \leadsto q_2}$ if and only if some instantiation $q_1 \leadsto
p[e]:u$ with $e$ as the side condition can prove $q_1 \leadsto q_2$
with the Scan rule. Similarly, define $C_{q_1 \leadsto q_2} \subseteq
E \times E$ be the set of ``last steps taken'' to prove $q_1 \leadsto
q_2$ with a Complete, i.e. $\langle e_1, e_2 \rangle$ is in $C_{q_1
  \leadsto q_2}$ if and only if some instantiations $q_1 \leadsto
p[e_1]:u_1$ and $n[e_1] \leadsto p[e_2]:u_2$ can prove $q_1 \leadsto
q_2$ with the Complete rule. Then, $\alpha(q_1 \leadsto q_2)$ can be
rewritten as
\begin{align*}
  \alpha(q_1 \leadsto q_2) = & A(q_1 \leadsto q_2) \oplus
  \left(
    \bigoplus_{e \in S_{q_1 \leadsto q_2}} \bigoplus_{\{u | q_1 \leadsto p[e]:u \in R\}} u \otimes w[e]
  \right) \oplus \\
  & \left(
    \bigoplus_{\langle e_1, e_2 \rangle \in C_{q_1 \leadsto q_2}} \bigoplus_{\{u_1 | q_1 \leadsto p[e_1]:u_1 \in R\}} \bigoplus_{\{u_2 | n[e_1] \leadsto p[e_2]:u_2 \in R\}} u_1 \otimes w[e_1] \otimes u_2 \otimes w[e_2]
  \right) \\
  = & A(q_1 \leadsto q_2) \oplus
  \left(
    \bigoplus_{e \in S_{q_1 \leadsto q_2}} \alpha(q_1 \leadsto p[e]) \otimes w[e]
  \right) \oplus \\
  & \left(
    \bigoplus_{\langle e_1, e_2 \rangle \in C_{q_1 \leadsto q_2}} \alpha(q_1 \leadsto p[e_1]) \otimes w[e_1] \otimes \alpha(n[e_1] \leadsto p[e_2]) \otimes w[e_2]
  \right)
\end{align*}

This recursive formulation allows us to compute the shortest distance
of an item using the shortest distance of its component
sub-items. When the WPDA $M$ has a bounded stack, one can easily
derive an algorithm that computes the shortest distance using
$\mathcal{L}_M$. Figure \ref{fig:inside-alg} is a simple example of
such an algorithm. This algorithm carries out a standard agenda-based
reasoning with the relaxation technique
\citep{cormen_relaxation_2009}, where $Q$ is the agenda. The map
$\alpha$ maintains the current estimate of each proven item's inside
weight. Lines 4-7 seed the axioms as the starting point of
reasoning. Then lines 8-26 try to prove new items by applying the Scan
rule (lines 12-13) and the Complete rule (lines 14-24). Any item that
is newly proven or proven with a smaller weight is added back to the
agenda in the $Relax$ function.

The above algorithm is conveniently derived from the weighted
deduction system using standard techniques. Nevertheless, there are
other strategies that can also be used; for example, the shortest path
algorithm in \citet{allauzen_pushdown_2012} is essentially computing
the inside weights with a multi-agenda strategy.

\begin{figure}
  \centering
  \begin{boxedminipage}{0.9\textwidth}
    \begin{algorithmic}[5]
      \Function {$Inside$}{}
      \State {$\alpha \gets \text{ empty map}$}
      \State {$Q \gets \text{ empty queue}$}
      \ForAll {entering state $q$}
        \State {$Push(q \leadsto q, Q)$}
        \State {$\alpha[q \leadsto q] \gets \overline{1}$}
      \EndFor
      \While {$Q$ is not empty}
        \State {$q_1 \leadsto q_2 \gets Pop(Q)$}
        \State {$u \gets \alpha[q_1 \leadsto q_2]$}
        \ForAll {transition $e$ such that $p[e] = q_2$}
          \If {$i[e] \in \Sigma \cup \{ \epsilon \}$}  \Comment {Scan}
            \State{$Relax(q_1 \leadsto n[e], u \otimes w[e])$}
          \ElsIf {$i[e] \in \Pi$} \Comment {Complete; as the left antecedent}
            \ForAll {$e'$ such that $i[e'] = \hat{i}[e]$ and $n[e] \leadsto p[e']$ is in $\alpha$}
              \State{$Relax(q_1 \leadsto n[e'], u \otimes w[e] \otimes \alpha[n[e] \leadsto p[e']] \otimes w[e'])$}
            \EndFor
          \ElsIf {$i[e] \in \hat{\Pi}$} \Comment {Complete; as the right antecedent}
            \ForAll {$e'$ such that $i[e'] = \hat{i}[e]$ and $n[e'] = q_1$}
              \ForAll {$q_3$ such that $q_3 \leadsto p[e']$ is in $\alpha$}
                \State{$Relax(q_3 \leadsto n[e], \alpha[q_3 \leadsto p[e']] \otimes w[e'] \otimes u \otimes w[e])$}
              \EndFor
            \EndFor
          \EndIf
        \EndFor
      \EndWhile
      \EndFunction
      \State{}
      \Function{$Relax$}{$q_1 \leadsto q_2, w$}
      \If {$q_1 \leadsto q_2$ is in $\alpha$}
        \State {$u \gets \alpha[q_1 \leadsto q_2] \oplus w$}
        \If {$u \neq \alpha[q_1 \leadsto q_2]$}
          \State {$\alpha[q_1 \leadsto q_2] \gets u$}
          \State {$Push(q_1 \leadsto q_2, Q)$ if $q_1 \leadsto q_2$ not already in $Q$}
        \EndIf
      \Else
        \State {$\alpha[q_1 \leadsto q_2] \gets u$}
        \State {$Push(q_1 \leadsto q_2, Q)$ if $q_1 \leadsto q_2$ not already in $Q$}
      \EndIf
      \EndFunction
    \end{algorithmic}
  \end{boxedminipage}
  \caption{A simple Inside algorithm}
  \label{fig:inside-alg}
\end{figure}

\section{Algorithm 1}
\label{sec:algorithm-1}

% The automaton $M$ has also to satisfy a few constraints, all of which
% have something to do with the termination of the algorithm. We
% postpone the discussion to Section \ref{sec:complexity}.

Having discussed the shortest distance problem in a WPDA, we now move
on to the $k$-shortest-path problem. The key idea of our first
algorithm is similar to the A* $k$-best parsing algorithm in
\citet{pauls_k-best_2009}. As we have shown in Section
\ref{sec:deduction}, similar to parsing, the execution of a WPDA can
be described as a weighted deductive logic. The generalized A* search
algorithm from \citet{felzenszwalb_generalized_2007} can then be
applied with a monotonic and admissible heuristic function to find the
$k$ instantiations of the goal item with smallest weights, from which
we get the $k$ shortest paths. The outside weight of items can be
defined with similar meanings to parsing and used as an exact
heuristic. Another, inexact heuristic will also be discussed, which
will eventually lead to our second algorithm.

\subsection{A* search on a deductive logic}
\label{sec:a-star}

\citet{felzenszwalb_generalized_2007} introduce the generalized A*
search algorithm on a deductive logic. Although the original algorithm
assumes the weights are from a positive tropical semiring, this is not
a necessary requirement in our problem, as we show next.

Similar to the original A* algorithm on graphs \citep{hart1968formal},
we need a heuristic function $H$ to estimate the final weight
continuing from the current search state (an instantiation in this
case) to the closest goal item. More formally, for a weighted logic
$\mathcal{L}$ with (unweighted) item space $I$ on semiring $\langle
\mathbb{K}, \oplus, \otimes, \overline{0}, \overline{1} \rangle$, a
heuristic function $H: \langle I, \mathbb{K} \rangle \to \mathbb{K}$
is any function satisfying the following,
\begin{description}
\item[Admissibility] For any provable instantiation of the goal item
  $G:w$,
  \[
  H(G:w) = w
  \]
\item[Monotonicity] For any provable instantiations $A_1:w_1, A_2:w_2,
  \ldots, A_m:w_m$ and an inference rule
  \begin{prooftree}
    \AxiomC{$A_1:w_1$}
    \AxiomC{$A_2:w_2$}
    \AxiomC{$\ldots$}
    \AxiomC{$A_m:w_m$}
    \QuaternaryInfC{$B:g(w_1, w_2, \ldots, w_m)$}
  \end{prooftree}
  and $1 \leq i \leq n$,
  \[
  H(A_i:w_i) \leq H(B:g(w_1, w_2, \ldots, w_m))
  \]
  where $\leq$ is the natural ordering of the
  semiring.
\end{description}
With such an $H$, the A* algorithm on a deductive logic can then be
described as in Figure \ref{fig:alg}.

\begin{figure}
  \centering
  \begin{boxedminipage}{0.9\textwidth}
    \begin{algorithmic}[5]
      \State {$S \gets \text{ empty set of proven instantiations}$}
      \State {$Q \gets \text{ empty min-priority queue}$}
      \ForAll {axiom $A:u$}
        \State {$Push(A:u, Q)$ with priority $H(A:u)$}
      \EndFor
      \While {$Q$ is not empty}
        \State {$A:u \gets Pop(Q)$}
        \If {$A:u$ is a goal item}
          \State Output $A:u$
        \EndIf
        \State {Add $A:u$ to $S$}
        \ForAll {new instantiation $B:v$ proveable using $A:u$ and any member of $S$}
          \State {$Push(B:v, Q)$ with priority $H(B:v)$}
        \EndFor
      \EndWhile
    \end{algorithmic}
  \end{boxedminipage}
  \caption{The generalized A* algorithm}
  \label{fig:alg}
\end{figure}

Similar to the original A* algorithm, the following property holds for
the generalized A* algorithm as well:
\begin{thm}
  If a monotonic $H$ is used, the generalized A* algorithm pops
  instantiations in increasing order of their $H$ value.
\end{thm}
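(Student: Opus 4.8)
The plan is to run an induction over the iterations of the while loop, maintaining a single invariant about the contents of the priority queue $Q$. Write $A^{(j)}:u^{(j)}$ for the $j$-th instantiation popped, and let $h_j = H(A^{(j)}:u^{(j)})$. The goal is to show $h_1 \le h_2 \le \cdots$ in the natural ordering (non-decreasing, since ties are possible). The invariant I would carry is: \emph{immediately after the $j$-th pop, every instantiation still present in $Q$ has $H$-value $\ge h_j$}. Given this invariant, the theorem is almost immediate: when the $(j{+}1)$-th pop occurs, $A^{(j+1)}:u^{(j+1)}$ is the minimum-priority element of $Q$, and since the invariant guarantees every element of $Q$ has $H$-value $\ge h_j$, in particular $h_{j+1} \ge h_j$.

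So the real work is in preserving the invariant. For the base case, before the first pop there is no predecessor and nothing to order; after the first pop the invariant is established exactly as in the inductive step below. For the inductive step, suppose the invariant holds after the $j$-th pop and consider the $(j{+}1)$-th iteration. We pop $A^{(j+1)}:u^{(j+1)}$, the minimum-priority element currently in $Q$. Two observations then re-establish the invariant with $h_{j+1}$ in place of $h_j$. First, every instantiation that \emph{remains} in $Q$ after this pop was already in $Q$ and, by the minimum-priority property, has $H$-value $\ge h_{j+1}$; note that this automatically upgrades the old bound $\ge h_j$ to the sharper $\ge h_{j+1}$. Second, every instantiation $B:v$ \emph{pushed} during this iteration is, by the algorithm, provable using $A^{(j+1)}:u^{(j+1)}$ together with members of $S$; hence $A^{(j+1)}:u^{(j+1)}$ occurs as one of the antecedents $A_i:w_i$ of the inference rule that produces $B:v$, and monotonicity gives
\[
h_{j+1} = H(A_i:w_i) \le H(B : g(w_1,\ldots,w_m)) = H(B:v).
\]
Thus both the surviving and the freshly pushed elements of $Q$ have $H$-value $\ge h_{j+1}$, which is precisely the invariant after the $(j{+}1)$-th pop.

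The one step that deserves care -- and the hinge of the whole argument -- is the claim that the popped instantiation is genuinely an antecedent of every rule fired in its iteration, so that monotonicity can be invoked. This is exactly what the algorithm's restriction to instantiations ``provable using $A:u$ and any member of $S$'' enforces: a consequence derivable purely from elements already in $S$ would have been generated in an earlier iteration, so each new $B:v$ must use $A:u$. Everything else is routine priority-queue bookkeeping, together with the subtle but essential point that the minimum-priority property at each pop is what lets us tighten the bound on the residual queue from $h_j$ up to $h_{j+1}$.
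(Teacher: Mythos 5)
Your proof is correct. It is essentially the paper's argument recast as a direct induction: the paper uses the same two facts (the popped instantiation is the queue minimum, and every instantiation pushed in that iteration has the popped one as an antecedent, so monotonicity bounds its $H$-value from below), but packages them as a contradiction on the first out-of-order pair rather than as your maintained queue invariant.
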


The proof of the tropical semiring case can be found in
\citet{felzenszwalb_generalized_2007}. We include the proof simply to
show this is the case with any monotonic heuristic function and any
semiring with the path property; not just the tropical semiring.

\begin{proof}
  Suppose some instantiation is not popped in order of the $H$
  value. Let the instantiations popped in order be $A_1:w_1, A_2:w_2,
  \ldots$ and let $i$ be the smallest index such that
  $H(A_{i-1}:w_{i-1}) > H(A_i:w_i)$. Right before $A_{i-1}:w_{i-1}$ is
  popped, $A_i:w_i$ cannot be inside $Q$, otherwise it will be popped
  instead. This means $A_i:w_i$ is added into $Q$ after popping
  $A_{i-1}:w_{i-1}$ by applying some inference rule with
  $A_{i-1}:w_{i-1}$. The application is of the form
  \begin{prooftree}
    \AxiomC{$\ldots$}
    \AxiomC{$A_{i-1}:w_{i-1}$}
    \AxiomC{$\ldots$}
    \TrinaryInfC{$A_{i}:g(\ldots, w_{i-1}, \ldots)$}
  \end{prooftree}
  Because $H$ is monotonic,
  \[
  H(A_{i-1}:w_{i-1}) \leq H(A_i:g(\ldots, w_{i-1}, \ldots)) = H(A_i:w_i)
  \]
  This contradicts the assumption $H(A_{i-1}:w_{i-1}) > H(A_i:w_i)$.
\end{proof}

If $H$ is also admissible, then for any instantiation of a goal item
$G:w$, $H(G:w)$ is just $w$. Thus such instantiations are popped in
increasing order of their weights and the first $k$ such
instantiations popped are the ones with the smallest weight.

\subsection{Outside weight as an exact heuristic}

For a given instantiation $q_1 \leadsto q_2 : u$, we want the
heuristic to tell us the weight of the shortest accepting path
continuing from this instantiation. Such a heuristic is trivially
monotonic and admissible. Let $\pi$ be the path induced by $q_1
\leadsto q_2 : u$, and define
\[
H_1(q_1 \leadsto q_2 : u) = \bigoplus_{\mu, \nu} w[\mu] \otimes u \otimes w[\nu]
\]
where the sum is over all pairs of prefixes and suffixes of
transitions such that $\mu\pi\nu$ forms an accepting path. When the
semiring is commutative, the heuristic has a simple form. Define
$\beta(q_1 \leadsto q_2) = \bigoplus_{\mu, \nu} w[\mu] \otimes
w[\nu]$; then
\[
H_1(q_1 \leadsto q_2 : u) = \beta(q_1 \leadsto q_2) \otimes u
\]

\begin{figure}
  \centering
  \begin{tikzpicture}[node distance=5em]
    \node[state,initial,initial text={}](s){$s$};
    \node[state](q0)[right of=s]{};
    \node[state](q1)[right of=q0]{$q_1$};
    \node[state](q2)[right of=q1]{$q_2$};
    \node[state](q3)[right of=q2]{};
    \node[state](q4)[right of=q3]{};
    \node[state,accepting](f)[right of=q4]{$f$};

    \path[->]
    (s) edge (q0)
    (q0) edge node[above] {(} (q1)
    (q1) edge (q2)
    (q2) edge (q3)
    (q3) edge node[above] {)} (q4)
    (q4) edge (f);

    \draw ($(q1.north) + (0, 1ex)$) -- ($(q1.north) + (0, 1ex) + (0, 2em)$) node[midway](il){};
    \draw ($(q2.north) + (0, 1ex)$) -- ($(q2.north) + (0, 1ex) + (0, 2em)$) node[midway](ir){};
    % \draw ($(q3.north) + (0, 1ex)$) -- ($(q3.north) + (0, 1ex) + (0, 2em)$) node[midway](irr){};
    \draw[<->] (il) -- (ir) node[midway,above]{Inside};
    % \draw[<->] (ir) -- (irr) node[midway,above,align=center]{Reverse\\Inside};

    \draw ($(s.south) + (0, -1ex)$) -- ($(s.south) + (0, -1ex) + (0, -2em)$) node[midway](oll){};
    \draw ($(q1.south) + (0, -1ex)$) -- ($(q1.south) + (0, -1ex) + (0, -2em)$) node[midway](olr){};
    \draw[<->] (oll) -- (olr) node[midway](ol){};

    \draw ($(q2.south) + (0, -1ex)$) -- ($(q2.south) + (0, -1ex) + (0, -2em)$) node[midway](orl){};
    \draw ($(f.south) + (0, -1ex)$) -- ($(f.south) + (0, -1ex) + (0, -2em)$) node[midway](orr){};
    \draw[<->] (orl) -- (orr) node[midway](or){};

    \draw [decorate,decoration={brace,mirror,amplitude=1em}] ($(ol.south) + (0, -1em)$) -- ($(or.south) + (0, -1em)$) node[midway,yshift=-1.5em] {Outside};
  \end{tikzpicture}
  \caption{Inside and outside weights on a shortest path}
  \label{fig:inside-outside-visual}
\end{figure}

We call $\beta(q_1 \leadsto q_2)$ the \emph{outside weight} of $q_1
\leadsto q_2$, because on the shortest accepting path going through
$q_1 \leadsto q_2$, $\beta(q_1 \leadsto q_2)$ is the weight of the
partial path ``outside'' of $q_1 \leadsto q_2$, as illustrated in
Figure \ref{fig:inside-outside-visual}. This can be easily computed by
applying the Scan and Complete rules in reverse, starting from the
goal after the inside weights have been computed. See Figure
\ref{fig:outside-alg} for a simple algorithm. Very similar to the
inside algorithm in Figure \ref{fig:inside-alg}, we use agenda-based
reasoning, but with the goal item as the starting point (lines
5-6). Then lines 7-20 try to propagate the estimates to inner items by
applying inference rules in reverse.

\begin{figure}
  \centering
  \begin{boxedminipage}{0.9\textwidth}
    \begin{algorithmic}[5]
    \Function {$Outside$}{}
      \State {$\alpha \gets \text{ the inside weights from } Inside$}
      \State {$\beta \gets \text{ empty map}$}
      \State {$Q \gets \text{ empty queue}$}
      \State {$\beta[s \leadsto f] \gets \overline{1}$}
      \State {$Push(s \leadsto f, Q)$}
      \While {$Q$ is not empty}
        \State {$q_1 \leadsto q_2 \gets Pop(Q)$}
        \State {$u \gets \beta[q_1 \leadsto q_2]$}
        \ForAll {incoming $e$ of $q_2$}
          \If {$i[e] \in \Sigma \cup \{ \epsilon \}$} \Comment{Scan in reverse}
            \State{$Relax(q_1 \leadsto p[e], u \otimes w[e])$}
          \ElsIf {$i[e] \in \hat{\Pi}$} \Comment{Complete in reverse}
            \ForAll {$e'$ such that $i[e'] = \hat{i}[e]$ and $q_1 \leadsto p[e']$ and $n[e'] \leadsto p[e]$ both in $\alpha$}
              \State{$Relax(q_1 \leadsto p[e'], u \otimes w[e'] \otimes \alpha[n[e'] \leadsto p[e]] \otimes w[e])$}
              \State{$Relax(n[e'] \leadsto p[e], u \otimes w[e'] \otimes \alpha[q_1 \leadsto p[e']] \otimes w[e])$}
            \EndFor
          \EndIf
        \EndFor
      \EndWhile
    \EndFunction
      \State {}
    \Function {$Relax$}{$q_1 \leadsto q_2, w$}
      \If {$q_1 \leadsto q_2$ is in $\beta$}
        \State {$u \gets \beta[q_1 \leadsto q_2] \oplus w$}
        \If {$u \neq \beta[q_1 \leadsto q_2]$}
          \State {$\beta[q_1 \leadsto q_2] \gets u$}
          \State {$Push(q_1 \leadsto q_2, Q)$ if $q_1 \leadsto q_2$ not already in $Q$}
        \EndIf
      \Else
        \State {$\beta[q_1 \leadsto q_2] \gets u$}
        \State {$Push(q_1 \leadsto q_2, Q)$ if $q_1 \leadsto q_2$ not already in $Q$}
      \EndIf
    \EndFunction
    \end{algorithmic}
  \end{boxedminipage}
  \caption{A simple Outside algorithm}
  \label{fig:outside-alg}
\end{figure}

\subsection{An inexact heuristic and its problems}

The above heuristic is very effective in the search because the
outside weight gives an exact estimate. However, pre-computation of
the outside weight requires two passes traversing the automaton. A
natural question is whether there is an inexact heuristic, yet still
monotonic and admissible, which takes less time to compute.

\begin{figure}
  \centering
  \begin{tikzpicture}[node distance=5em]
    % \node[state,initial,initial text={}](s){$s$};
    \node(q0)[right of=s]{$\cdots$};
    \node[state](q1)[right of=q0]{$q_1$};
    \node[state](q2)[right of=q1]{$q_2$};
    \node[state](q3)[above right of=q2,node distance=6em]{$q_3$};
    \node[state](q4)[below right of=q2,node distance=6em]{$q_4$};
    \node(q5)[right of=q3]{$\cdots$};
    \node(q6)[right of=q4]{$\cdots$};
    % \node[state,accepting](f)[below right of=q5,node distance=6em]{$f$};

    \path[->]
    % (s) edge (q0)
    (q0) edge node[auto] {(} (q1)
    (q1) edge (q2)
    (q2) edge (q3)
    (q2) edge (q4)
    (q3) edge node[auto]{)} (q5)
    (q4) edge node[auto]{)} (q6);
    % (q5) edge (f)
    % (q6) edge (f);

    \draw ($(q2) + (-1.2em, 1.2em)$) -- ($(q2) + (-1.2em, 1.2em) + (-1.5em, 1.5em)$) node[midway](tl){};
    \draw ($(q3) + (-1.2em, 1.2em)$) -- ($(q3) + (-1.2em, 1.2em) + (-1.5em, 1.5em)$) node[midway](tr){};
    \draw[<->] (tl) -- (tr) node[midway,sloped,above]{$D(q_2, q_3)$};

    \draw ($(q2) + (-1.2em, -1.2em)$) -- ($(q2) + (-1.2em, -1.2em) + (-1.5em, -1.5em)$) node[midway](bl){};
    \draw ($(q4) + (-1.2em, -1.2em)$) -- ($(q4) + (-1.2em, -1.2em) + (-1.5em, -1.5em)$) node[midway](br){};
    \draw[<->] (bl) -- (br) node[midway,sloped,below]{$D(q_2, q_4)$};

  \end{tikzpicture}
  \caption{$\gamma(q_1 \leadsto q_2) = D(q_2, q_3) \oplus D(q_2, q_4)$ is the shortest distance from $q_2$ to an ``exit''}
  \label{fig:reverse-inside-visual}
\end{figure}

\begin{figure}
  \centering
  \begin{tikzpicture}[node distance=5em]
    % \node[state,initial,initial text={}](s){$s$};
    \node(q0)[right of=s]{$\cdots$};
    \node[state](q1)[right of=q0]{$q_1$};
    \node[state](q2)[right of=q1]{$q_2$};
    \node[state](q3)[above right of=q2,node distance=6em]{$q_3$};
    \node[state](q4)[below right of=q2,node distance=6em]{$q_4$};
    \node(q5)[right of=q3]{$\cdots$};
    \node(q6)[right of=q4]{$\cdots$};
    % \node[state,accepting](f)[below right of=q5,node distance=6em]{$f$};

    \path[<-]
    % (s) edge (q0)
    (q0) edge node[auto] {(} (q1)
    (q1) edge (q2)
    (q2) edge (q3)
    (q2) edge (q4)
    (q3) edge node[auto]{)} (q5)
    (q4) edge node[auto]{)} (q6);
    % (q5) edge (f)
    % (q6) edge (f);

    \draw ($(q2) + (-1.2em, 1.2em)$) -- ($(q2) + (-1.2em, 1.2em) + (-1.5em, 1.5em)$) node[midway](tl){};
    \draw ($(q3) + (-1.2em, 1.2em)$) -- ($(q3) + (-1.2em, 1.2em) + (-1.5em, 1.5em)$) node[midway](tr){};
    \draw[<->] (tl) -- (tr) node[midway,sloped,above]{$\alpha(q_3 \leadsto q_2)$};

    \draw ($(q2) + (-1.2em, -1.2em)$) -- ($(q2) + (-1.2em, -1.2em) + (-1.5em, -1.5em)$) node[midway](bl){};
    \draw ($(q4) + (-1.2em, -1.2em)$) -- ($(q4) + (-1.2em, -1.2em) + (-1.5em, -1.5em)$) node[midway](br){};
    \draw[<->] (bl) -- (br) node[midway,sloped,below]{$\alpha(q_4 \leadsto q_2)$};

  \end{tikzpicture}
  \caption{The reversed WPDA of Figure \ref{fig:reverse-inside-visual}}
  \label{fig:reversed-pda}
\end{figure}

When the multiplication does not decrease the weight,\footnote{For
  example, the tropical semiring with only non-negative weights in the
  setting of the classical shortest path problem on a graph, where
  real valued weights are summed within the path and the minimum is
  taken (i.e. use $+$ as $\otimes$ and $\min$ as $\oplus$).} one may
use the weight of only part of the final shortest accepting path as an
estimate. This can produce a heuristic that is less expensive to
compute, possibly at the cost of increasing the search time. In
particular, define $D(q_1, q_2)$ to be the shortest distance between
any pair of states $q_1$ and $q_2$, and $\gamma(q_1 \leadsto q_2) =
\bigoplus_{q_3} D(q_2, q_3)$, where the summation is over all states
reachable from $q_2$ that have a closing parenthesis or simply $f$
when $q_1$ is $s$ (call such a state an \emph{exiting} state, in
contrast with an \emph{entering} state). $\gamma(q_1 \leadsto q_2)$ is
roughly how far away $q_1 \leadsto q_2$ is to a pair of immediate
enclosing parenthesis.\footnote{This is only a rough estimate since
  there may not be a opening parenthesis going to $q_1$ that matches
  the closing parenthesis of the selected exitting state. However, the
  actual shortest distance is never smaller than this, which means the
  estimate is still admissible.} For example, in Figure
\ref{fig:reverse-inside-visual}, $\gamma(q_1 \leadsto q_2) = D(q_2,
q_3) \oplus D(q_2, q_4)$ is the shortest distance from $q_2$ to
exitting states $q_3$ and $q_4$. All the relevant values of $D$ are in
fact the inside weights of the reversed WPDA of $M$ (for example, see
Figure \ref{fig:reversed-pda}),\footnote{That is, reverse the
  direction of transitions; swap $s$ and $f$; and swap $\Pi$ and
  $\hat{\Pi}$.} therefore we call it the \emph{reverse inside weight}.

Then, we can define the following heuristic,
\[
H_2(q_1 \leadsto q_2 : u) = u \otimes \gamma(q_1 \leadsto q_2)
\]
This gives us the weight of the shortest path starting at the induced
path of $q_1 \leadsto q_2 : u$ to any exiting state, which may be a
part of an accepting path. It is trivially admissible because
$\gamma(s \leadsto f) = D(f, f) = \overline{1}$. When multiplication
does not decrease the weight, the weight of part of a path is always
smaller than or equal to the weight of the whole path. Therefore, the
heuristic is monotonic. Unlike the outside weight, the semiring does
not need to be commutative for this heuristic to be well-defined.

\begin{figure}
  \centering
  \begin{tikzpicture}[node distance=5em]
    \node[state,initial,initial text={}](s){$s$};
    \node[state](q1)[right of=s]{$q_1$};
    \node[state](q2)[above right of=q1,node distance=6em]{$q_2$};
    \node[state](q3)[below right of=q1,node distance=6em]{$q_3$};
    \node[state](q4)[right of=q2]{$q_4$};
    \node[state](q5)[right of=q3]{$q_5$};
    \node[state](q6)[right of=q4]{$q_6$};
    \node[state](q7)[right of=q5]{$q_7$};
    \node[state](q8)[right of=q7]{$q_8$};
    \node[state,accepting](f)[right of=q6]{$f$};

    \path[->]
    (s) edge node[above]{(:0} (q1)
    (q1) edge node[sloped,above]{a:1} (q2)
    (q1) edge node[sloped,below]{b:0} (q3)
    (q2) edge node[above]{a:1} (q4)
    (q4) edge node[above]{):0} (q6)
    (q6) edge node[above]{a:1} (f)
    (q3) edge node[below]{b:0} (q5)
    (q5) edge node[below]{b:0} (q7)
    (q7) edge node[below]{b:0} (q8)
    (q8) edge node[sloped,above]{):4} (f)
    ;
  \end{tikzpicture}
  \caption{A problematic WPDA for $H_2$, weights are in the tropical semiring}
  \label{fig:problem-of-h2}
\end{figure}

\begin{table}
  \centering
  \begin{tabular}{c|cc}
    Item & $\beta$ & $\gamma$ \\
    \hline
    $s \leadsto s$ & 3 & 3 \\
    $s \leadsto q_6$ & 1 & 1 \\
    $s \leadsto f$ & 0 & 0 \\
    $q_1 \leadsto q_1$ & 3 & 0 \\
    $q_1 \leadsto q_2$ & 2 & 1 \\
    $q_1 \leadsto q_3$ & 4 & 0 \\
    $q_1 \leadsto q_4$ & 1 & 0 \\
    $q_1 \leadsto q_5$ & 4 & 0 \\
    $q_1 \leadsto q_7$ & 4 & 0 \\
    $q_1 \leadsto q_8$ & 4 & 0
  \end{tabular}
  \caption{}
  \label{tab:h-values}
\end{table}

Unfortunately, though we now spend less time in pre-computing the
heuristic, the actual A* search usually ends up taking much longer
because of the inexactness. To see why, notice that any instantiation
of an item with a weight smaller than the shortest accepting path has
to be visited, even if it will only be used in an accepting path far
longer than the $k$ shortest ones. For example, consider the WPDA in
Figure \ref{fig:problem-of-h2}, with the relevant values of $\beta$
and $\gamma$ listed in Table \ref{tab:h-values}. When using $H_1$, the
following are instantiated before reaching the 1-shortest path:
\[
\begin{array}{ll}
  s \leadsto s : 0 & (H_1 = 0 + 3 = 3, Q = \{q_1 \leadsto q_1 : 0\}) \\
  q_1 \leadsto q_1 : 0 & (H_1 = 0 + 3 = 3, Q = \{q_1 \leadsto q_2 : 1, q_1 \leadsto q_3 : 0\}) \\
  q_1 \leadsto q_2 : 1 & (H_1 = 1 + 2 = 3, Q = \{q_1 \leadsto q_4 : 2, q_1 \leadsto q_3 : 0\}) \\
  q_1 \leadsto q_4 : 2 & (H_1 = 2 + 1 = 3, Q = \{s \leadsto q_6 : 2, q_1 \leadsto q_3 : 0\}) \\
  s \leadsto q_6 : 2 & (H_1 = 2 + 1 = 3, Q = \{s \leadsto f : 3, q_1 \leadsto q_3 : 0\}) \\
  s \leadsto f : 3 & (H_1 = 3 + 0 = 3, Q = \{q_1 \leadsto q_3 : 0\})
\end{array}
\]
However, when $H_2$ is used, the following are instantiated before the 1-shortest,
\[
\begin{array}{ll}
  q_1 \leadsto q_1 : 0 & (H_2 = 0 + 0 = 0, Q =\{q_1 \leadsto q_3 : 0, q_1 \leadsto q_2 : 1, s \leadsto s : 0\}) \\
  q_1 \leadsto q_3 : 0 & (H_2 = 0 + 0 = 0, Q =\{q_1 \leadsto q_5 : 0, q_1 \leadsto q_2 : 1, s \leadsto s : 0\}) \\
  q_1 \leadsto q_5 : 0 & (H_2 = 0 + 0 = 0, Q =\{q_1 \leadsto q_7 : 0, q_1 \leadsto q_2 : 1, s \leadsto s : 0\}) \\
  q_1 \leadsto q_7 : 0 & (H_2 = 0 + 0 = 0, Q =\{q_1 \leadsto q_8 : 0, q_1 \leadsto q_2 : 1, s \leadsto s : 0\}) \\
  q_1 \leadsto q_8 : 0 & (H_2 = 0 + 0 = 0, Q =\{q_1 \leadsto q_2 : 1, s \leadsto s : 0, s \leadsto f : 4\}) \\
  q_1 \leadsto q_2 : 1 & (H_2 = 1 + 1 = 2, Q =\{q_1 \leadsto q_4 : 2, s \leadsto s : 0, s \leadsto f : 4\}) \\
  q_1 \leadsto q_4 : 2 & (H_2 = 2 + 0 = 2, Q =\{s \leadsto s : 0, s \leadsto f : 4\}) \\
  s \leadsto s : 0 & (H_2 = 0 + 3 = 3, Q =\{s \leadsto q_6 : 0, s \leadsto f : 4\}) \\
  s \leadsto q_6 : 2 & (H_2 = 2 + 1 = 3, Q =\{s \leadsto f : 3, s \leadsto f : 4\}) \\
  s \leadsto f : 3 & (H_2 = 3 + 0 = 3, Q =\{s \leadsto f : 4\})
\end{array}
\]
$H_2$ ends up visiting more instantiations along the path from $q_1$
to $q_8$ because that path is shorter in the scope of the enclosing
parentheses. The following closing parenthesis completely flips the
position, but this is some information $H_1$ ``knows'' while $H_2$
does not. In practice, we find this happens so frequently that $H_2$
fails to output the shortest path within a reasonable amount of time.

Another problem with $H_2$ is that when multiplication may increase
the weight (for example, in the tropical semiring with negative
weights, which is commonly used in applications such machine
translation), the heuristic is no longer monotonic.

\section{Algorithm 2}
\label{sec:algorithm-2}

We can adopt a new search strategy to address the problems with
$H_2$. Before describing the algorithm, we take a brief excursion to
introduce a technique from \citet{huang-chiang:2005:IWPT}. Consider
the following problem:
\begin{quote}
  Let $A$ and $B$ be two (possibly infinite) ordered sequence of real
  numbers (i.e. for any $i$, $A_i \leq A_{i+1}$ and $B_i \leq
  B_{i+1}$). Find the $k$ smallest elements in $A \times B$, ordered
  by the sum of the pair.
\end{quote}

\begin{figure}
  \centering
  \begin{boxedminipage}{0.9\textwidth}
    \begin{algorithmic}[5]
      \State {$Q \gets \text{ empty min-priority queue}$}
      \State {$Push(\langle 1, 1 \rangle, Q)$ with priority $A_1 + B_1$}
      \While {$Q$ is not empty}
        \State {$\langle i, j \rangle \gets Pop(Q)$}
        \State {Output $\langle A_i, B_j \rangle$}
        \If {$\langle i+1, j \rangle$ not already in $Q$}
          \State {$Push(\langle i+1, j \rangle, Q)$ with priority $A_{i+1} + B_{j}$}
        \EndIf
        \If {$\langle i, j+1 \rangle$ not already in $Q$}
          \State {$Push(\langle i, j+1 \rangle, Q)$ with priority $A_{i} + B_{j+1}$}
        \EndIf
      \EndWhile
    \end{algorithmic}
  \end{boxedminipage}
  \caption{}
  \label{fig:lazy-mult-alg}
\end{figure}

For example, when $A$ is $\{0, 2, 2\}$ and $B$ is $\{1, 2, 4\}$, the 3
smallest elements are $\{\langle 0, 1 \rangle, \langle 0, 2 \rangle,
\langle 2, 1 \rangle \}$. A naive solution is to compute the first $k$
elements in both $A$ and $B$ then sort all the $k^2$ combinations. The
technique from \citet{huang-chiang:2005:IWPT}, described in Figure
\ref{fig:lazy-mult-alg}, visits at most $2k$ combinations and usually
a lot fewer in practice. The key insight is that there is no need to
explore $\langle A_{i+1}, B_j \rangle$ or $\langle A_i, B_{j+1}
\rangle$ before $\langle A_i, B_j \rangle$ is popped because both of
them are guaranteed to be sub-optimal compared with $\langle A_i, B_j
\rangle$. When computing elements in $A$ and $B$ is expensive, this
technique is substantially faster than the naive solution.

The same idea can be applied in our problem. For any pair of entering
and exiting states $\langle p, q \rangle$, let $G_{pq}$ be the
sequence of balanced paths from $p$ to $q$ ordered by their weight and
let $G_{pq}^i$ be the $i$-th path. Following similar reasoning, we
know there is no need to compute the actual value of $G_{pq}^{i+1}$
before $G_{pq}^i$ is ever used as part of a larger path, in search of
the $k$ shortest accepting path. Furthermore, $G_{pq}$ can be
incrementally computed, as we show next in Figure
\ref{fig:lazy-expand-alg}.

\begin{figure}
  \centering
  \begin{boxedminipage}{0.9\textwidth}
    \begin{algorithmic}[5]
    \Function{$FindKth$}{$p, q, k$} \Comment{Finds the $k$-th element of $G_{pq}$}
      \If {the result has been cached}
        \State {\Return {the cached result}}
      \EndIf
      \State {$S$ is a global variable storing proven items, initialized as empty outside the function}
      \State {$Q_{pq}$ is a min-priority queue, initialized as empty outside the function}

      \If {$p \leadsto p$ not in $S$} \Comment{First time called}
        \State {$Push(p \leadsto p : \overline{1}, Q_{pq})$ with priority $D(p, q)$}
      \EndIf

      \While {$Q_{pq}$ is not empty}
        \If {top of $Q_{pq}$ is proven via Scan}
          \State {$p \leadsto r : u \gets Pop(Q_{pq})$}
        \Else \Comment {via Complete; further pushing is needed}
          \State {$\langle p \leadsto r : u, v, e, e', j \rangle \gets Pop(Q_{pq})$}
          \State {$n[e] \leadsto p[e'] : w \gets FindKth(n[e], p[e'], j+1)$}
          \State {$h \gets v \otimes w[e] \otimes w \otimes w[e'] \otimes D(n[e'], q)$}
          \If {$h \neq \overline{0}$}
            \State {$Push(\langle p \leadsto r : v \otimes w[e] \otimes w \otimes w[e'], v, e, e', j+1 \rangle, Q_{pq})$ with priority $h$} \Comment {Store information for further pushing in the future}
          \EndIf
        \EndIf

        \State {Add $p \leadsto r : u$ to $S$}
        \ForAll {transition $e$ such that $p[e] = r$}
          \If {$i[e] \in \Sigma \cup \{\epsilon\}$} \Comment {Scan}
            \State {$h \gets u \otimes w[e] \otimes D(n[e], q)$}
            \If {$h \neq \overline{0}$}
              \State {$Push(p \leadsto n[e] : u \otimes w[e], Q_{pq})$ with priority $h$}
             \EndIf
          \ElsIf {$i[e] \in \Pi$} \Comment {Complete; as the left antecedent}
            \ForAll {transition $e'$ such that $i[e'] = \hat{i}[e]$ and $D(n[e], p[e']) \neq \overline{0}$}
              \State {$n[e] \leadsto p[e'] : v \gets FindKth(n[e], p[e'], 1)$}
              \State {$h \gets u \otimes w[e] \otimes v \otimes w[e'] \otimes D(n[e'], q)$}
              \If {$h \neq \overline{0}$}
                \State {$Push(\langle p \leadsto n[e'] : u \otimes w[e] \otimes v \otimes w[e'], u, e, e', 1 \rangle, Q_{pq})$ with priority $h$} \Comment {Store information for further pushing in the future}
              \EndIf
            \EndFor
          \EndIf
        \EndFor

        \If {$p \leadsto r : w$ is a goal item}
          \State Cache $p \leadsto r : w$, then return $p \leadsto r : w$
        \EndIf
      \EndWhile
    \EndFunction
    \end{algorithmic}
  \end{boxedminipage}
  \caption{Algorithm 2}
  \label{fig:lazy-expand-alg}
\end{figure}

The algorithm operates as follows. First of all, instead of having a
single priority queue, now for every relevant $G_{pq}$, we have a
corresponding priority queue $Q_{pq}$. $Q_{pq}$ is only responsible
for finding the intermediate ``goal'', i.e. balanced paths from $p$ to
$q$, in increasing order of their weights. Further, only items of the
form $p \leadsto r$ are pushed into $Q_{pq}$, which allows us to use
the following heuristic that only requires the reverse inside weights,
\[
H_{pq}(p \leadsto r : u) = u \otimes D(r, q)
\]
Items are then proved in a top-down fashion, starting with
$G_{sf}$. The search process can be described recursively (Figure
\ref{fig:lazy-expand-alg}). Let the sequence in consideration be
$G_{pq}$,
\begin{itemize}
\item If there is no balanced path from $p$ to $q$ using any
  parenthesis, all proofs only involve the Scan rule. As a result,
  $G_{pq}$ can be incrementally computed without consulting any other
  sequence (lines 23-27).
\item Otherwise, let $e$ and $e'$ be the pair of parentheses
  encountered during the search. Simply query $G_{n[e]p[e']}$ to get
  the shortest path (line 30), and only use the $(k+1)$-th shortest
  path after an instantiation proved with the $k$-th one is popped
  (lines 14-18).
\end{itemize}

Though omitted in Figure \ref{fig:lazy-expand-alg} for a simpler
presentation, a further optimization is essential to achieve the
desired efficiency. Observe in the second case above, that the exact
knowledge of the shortest path from $n[e]$ to $p[e']$ is not required
until an item proved using that path is popped. Therefore, instead of
directly calling $FindKth(n[e], p[e'], 1)$, one can query $D(n[e],
p[e'])$ to get the shortest distance. This is sufficient to compute
the priority and ``promise'' an actual proof, which will be realized
once the item is popped. To distinguish actual instantiations from
those with a promise, we denote $q_1 \sim q_2 : u$ as an instantiation
where the last step is based on a promise.

\begin{table}
  \centering
  \begin{tabular}{c|c}
    Pair of states & D \\
    \hline
    $s, f$ & 3 \\
    $q_1, q_4$ & 2 \\
    $q_2, q_4$ & 1 \\
    $q_4, q_4$ & 0 \\
    $q_1, q_8$ & 0 \\
    $q_3, q_8$ & 0 \\
    $q_5, q_8$ & 0 \\
    $q_7, q_8$ & 0 \\
    $q_8, q_8$ & 0
  \end{tabular}
  \caption{}
  \label{tab:d-values}
\end{table}
To see the new algorithm at work, consider again the WPDA in Figure
\ref{fig:problem-of-h2}. Relevant values of $D$ are listed in Table
\ref{tab:d-values}. Then the following are instantiated before
reaching the 1-shortest path,
\begin{center} \small
  \begin{tabular}{ll|ll}
    \multicolumn{2}{c|}{$G_{sf}$} & \multicolumn{2}{c}{$G_{q_1q_4}$} \\
    $s \leadsto s : 0$ & $(H_{sf} = 0 + 3 = 3, Q_{sf} = \{s \sim q_6 : 2, s \sim f : 4\})$ & \\
    $s \sim q_6 : 2$ & $(H_{sf} = 2 + 1 = 3, Q_{sf} = \{s \sim f : 4\})$ & \\
    && $q_1 \leadsto q_1 : 0$ & $(H_{q_1q_4} = 0 + 2 = 2, Q_{q_1q_4} = \{q_1 \leadsto q_2 : 1\})$ \\
    && $q_1 \leadsto q_2 : 1$ & $(H_{q_1q_4} = 1 + 1 = 2, Q_{q_1q_4} = \{q_1 \leadsto q_4 : 2\})$ \\
    && $q_1 \leadsto q_4 : 2$ & $(H_{q_1q_4} = 2 + 0 = 2, Q_{q_1q_4} = \{\})$ \\
    $s \leadsto q_6 : 2$ & $(H_{sf} = 2 + 1 = 3, Q_{sf} = \{s \leadsto f : 3, s \sim f : 4\})$ & \\
    $s \leadsto f : 3$ & $(H_{sf} = 3 + 0 = 3, Q_{sf} = \{s \sim f : 4\})$ &
  \end{tabular}
\end{center}
Notice no item is ever instantiated from $G_{q_1q_8}$, which is
exactly the desired result.

Another benefit of grouping the search by the intermediate ``goals''
is there is not any special requirement on the semiring ---
multiplication neither has to be commutative nor non-decreasing.

\section{Experimental Results}
\label{sec:experiment}

\begin{figure}
  \centering
  \includegraphics[width=0.6\linewidth]{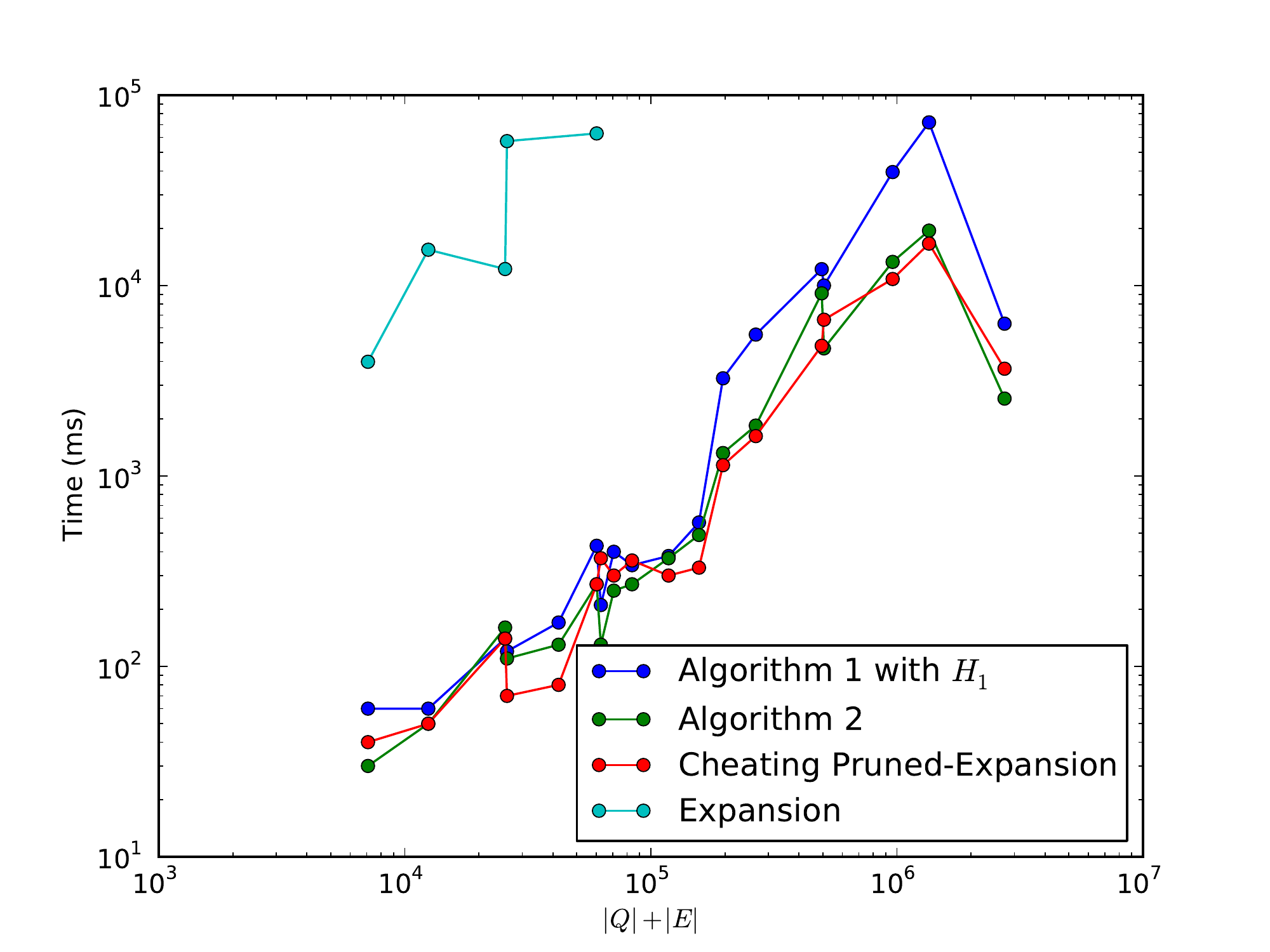}
  \caption{Timing on WPDAs with various sizes, $k = 1000$}
  \label{fig:size-time}
\end{figure}

\begin{figure}
  \centering
  \includegraphics[width=0.6\linewidth]{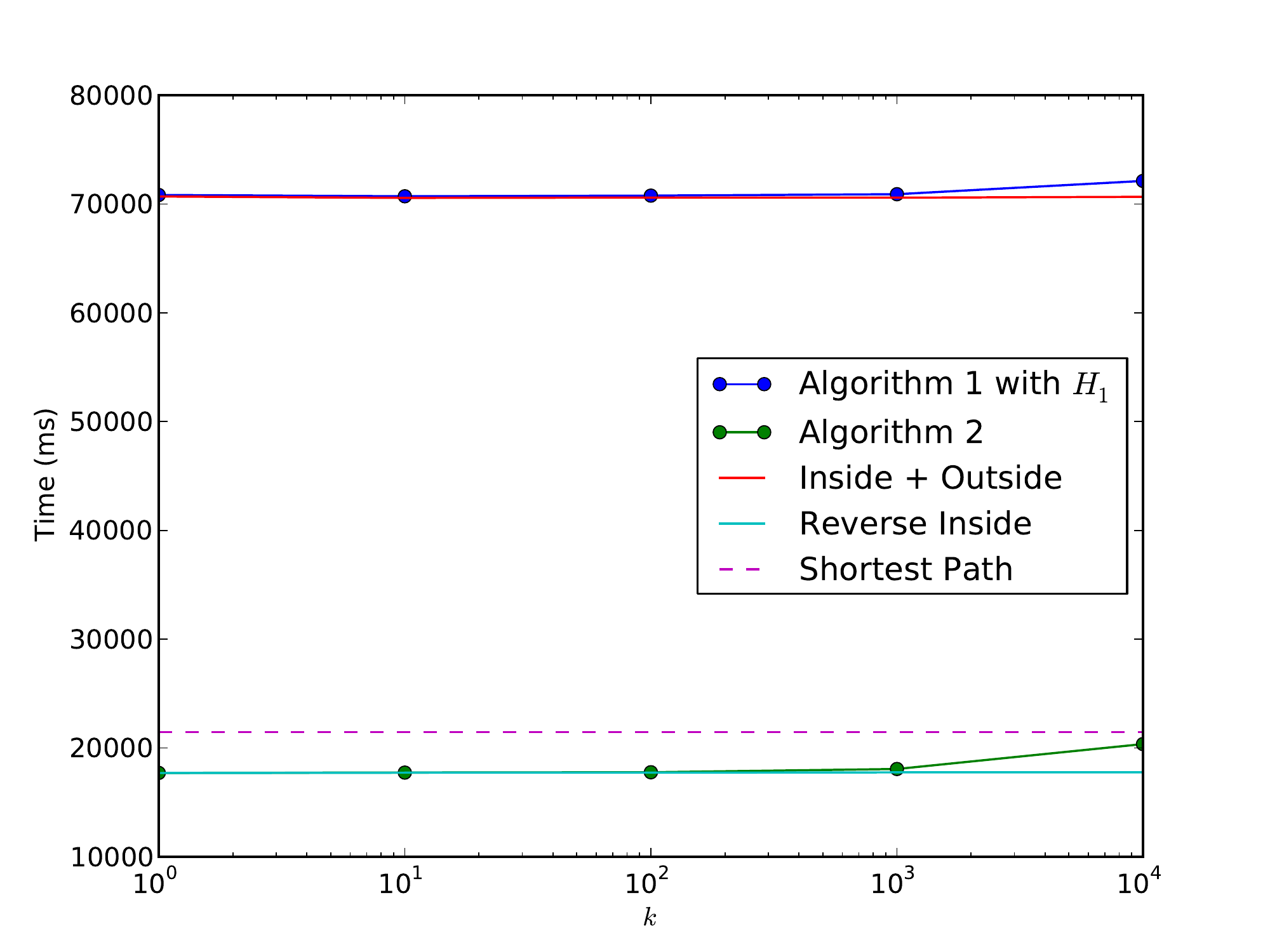}
  \caption{Timing on a WPDA with 398347 states and 951889 transitions}
  \label{fig:k-time}
\end{figure}

We tested our algorithms on WPDAs generated from the machine
translation system described in
\citet{iglesias_hierarchical_2011}. Figure \ref{fig:size-time}
compares the running time of the two algorithms with two previous
approaches (expansion and pruned-expansion with orcale threshold) in
finding the 1000 shortest paths on sample WPDAs with various
sizes. Due to the exponential time complexity of WPDA expansion, the
expansion baseline is only able to finish within our time and memory
limit on the 5 smallest sample inputs.\footnote{2 CPU hours; 4 GB of
  memory.} For the pruned-expansion approach, we pick the oracle
threshold (the exact weight difference between the shortest path and
the 1000th shortest one) for each sample.

Both of our algorithms are significantly faster than the expansion
baseline, and their performance is comparable on smaller input. But as
the size of the WPDA grows, the advantage of the single pass
pre-computation of Algorithm 2 becomes clear, resulting in a very
large time improvement in this case.

The performance of Algorithm 2 is close to the pruned-expansion's
oracle best case in almost all sample inputs. However, it is worth
noting that the perfect threshold varies significantly between samples
--- even for those generated from the same system using different
inputs, the factor of the perfect threshold relative to the weight of
the shortest path can vary from 0.35\% to 160\% while the median is
7\%. This justifies our previous claim about the difficulty in picking
an appropriate threshold.

Figure \ref{fig:k-time} breaks down the running time of our algorithms
on a large WPDA. Both of them spend most of their time on
pre-computing the heuristics and the actual search takes very little
time even with $k$ as large as 10000.

\section{Conclusion}
\label{sec:conclusion}

In this paper, we developed two algorithms for finding $k$ shortest
paths of a WPDA. Previously, there were two approaches to this
problem. The expansion approach expands the WPDA into an equivalent
WFSA, which requires exponential time and space, and then finds the
$k$ shortest paths of the WFSA. Another pruned-expansion approach
expands the WPDA into a WFSA with states or transitions not on a path
close enough by a given threshold to the shortest path by weight
removed, and then finds the $k$ shortest paths of the pruned
WFSA. This requires less time and space, but an appropriate threshold
is almost impossible to set.

In contrast, our algorithms do not need any pruning or threshold
picking and give the exact $k$ shortest paths. The experimental
results on real world input show that Algorithm 2 is highly efficient,
adding very little overhead to the shortest distance pre-computation,
whose running time is comparable to the original shortest path
algorithm in \citet{allauzen_pushdown_2012}.

\section*{Acknowledgements}

We would like to thank Gonzalo Iglesias for providing test inputs for
our experiments. This research was supported in part by the BOLT
program of the Defense Advanced Research Projects Agency, Contract
No. HR0011-12-C-0015. Any opinions, findings, conclusions or
recommendations expressed in this paper are those of the authors and
do not necessarily reflect the view of DARPA.

\bibliography{pdakbest.bib}

\begin{thebibliography}{10}
\providecommand{\natexlab}[1]{#1}
\providecommand{\url}[1]{\texttt{#1}}
\expandafter\ifx\csname urlstyle\endcsname\relax
  \providecommand{\doi}[1]{doi: #1}\else
  \providecommand{\doi}{doi: \begingroup \urlstyle{rm}\Url}\fi

\bibitem[Allauzen and Riley(2012)]{allauzen_pushdown_2012}
C.~Allauzen and M.~Riley.
\newblock A pushdown transducer extension for the openfst library.
\newblock In \emph{Proceedings of the Seventeenth International Conference on
  Implementation and Application of Automata, {(CIAA 2012)}}, 2012.

\bibitem[Chiang et~al.(2009)Chiang, Knight, and Wang]{chiang_11001_2009}
David Chiang, Kevin Knight, and Wei Wang.
\newblock 11,001 new features for statistical machine translation.
\newblock In \emph{Proceedings of Human Language Technologies: The 2009 Annual
  Conference of the North American Chapter of the Association for Computational
  Linguistics}, pages 218--226, Boulder, Colorado, June 2009. Association for
  Computational Linguistics.
\newblock URL \url{http://www.aclweb.org/anthology/N/N09/N09-1025}.

\bibitem[Collins and Koo(2005)]{collins_discriminative_2005}
Michael Collins and Terry Koo.
\newblock Discriminative reranking for natural language parsing.
\newblock \emph{Computational Linguistics}, 31\penalty0 (1):\penalty0 25--70,
  2005.
\newblock ISSN 0891-2017.
\newblock \doi{10.1162/0891201053630273}.
\newblock URL \url{http://dx.doi.org/10.1162/0891201053630273}.

\bibitem[Cormen et~al.(2009)Cormen, Leiserson, Rivest, and
  Stein]{cormen_relaxation_2009}
Thomas~H. Cormen, Charles~E. Leiserson, Ronald~L. Rivest, and Clifford Stein.
\newblock Relaxation.
\newblock In \emph{Introduction to Algorithms}, pages 648--650. The {MIT}
  Press, third edition edition, July 2009.
\newblock ISBN 0262033844.

\bibitem[Felzenszwalb and {McAllester}(2007)]{felzenszwalb_generalized_2007}
P.~F. Felzenszwalb and D.~{McAllester}.
\newblock The generalized a* architecture.
\newblock \emph{Journal of Artificial Intelligence Research}, 29\penalty0
  (1):\penalty0 153--190, 2007.
\newblock URL \url{https://www.aaai.org/Papers/JAIR/Vol29/JAIR-2906.pdf}.

\bibitem[Hart et~al.(1968)Hart, Nilsson, and Raphael]{hart1968formal}
P.E. Hart, N.J. Nilsson, and B.~Raphael.
\newblock A formal basis for the heuristic determination of minimum cost paths.
\newblock \emph{Systems Science and Cybernetics, IEEE Transactions on},
  4\penalty0 (2):\penalty0 100--107, 1968.

\bibitem[Huang and Chiang(2005)]{huang-chiang:2005:IWPT}
Liang Huang and David Chiang.
\newblock Better k-best parsing.
\newblock In \emph{Proceedings of the Ninth International Workshop on Parsing
  Technology}, pages 53--64, Vancouver, British Columbia, October 2005.
  Association for Computational Linguistics.
\newblock URL \url{http://www.aclweb.org/anthology/W/W05/W05-1506}.

\bibitem[Iglesias et~al.(2011)Iglesias, Allauzen, Byrne, de~Gispert, and
  Riley]{iglesias_hierarchical_2011}
Gonzalo Iglesias, Cyril Allauzen, William Byrne, Adri\`a de~Gispert, and
  Michael Riley.
\newblock Hierarchical phrase-based translation representations.
\newblock In \emph{Proceedings of the 2011 Conference on Empirical Methods in
  Natural Language Processing}, pages 1373--1383, Edinburgh, Scotland, {UK.},
  July 2011. Association for Computational Linguistics.
\newblock URL \url{http://www.aclweb.org/anthology/D11-1127}.

\bibitem[Pauls and Klein(2009)]{pauls_k-best_2009}
Adam Pauls and Dan Klein.
\newblock {K-Best} a* parsing.
\newblock In \emph{Proceedings of the Joint Conference of the 47th Annual
  Meeting of the {ACL} and the 4th International Joint Conference on Natural
  Language Processing of the {AFNLP}}, pages 958--966, Suntec, Singapore,
  August 2009. Association for Computational Linguistics.
\newblock URL \url{http://www.aclweb.org/anthology/P/P09/P09-1108}.

\bibitem[Shieber et~al.(1995)Shieber, Schabes, and
  Pereira]{shieber_principles_1995}
S.~M. Shieber, Y.~Schabes, and F.~C.~N. Pereira.
\newblock Principles and implementation of deductive parsing.
\newblock \emph{The Journal of Logic Programming}, 24\penalty0 (1-2):\penalty0
  3--36, 1995.
\newblock URL
  \url{http://www.sciencedirect.com/science/article/pii/074310669500035I}.

\end{thebibliography}

\appendix
\section{Proof of Properties of $\mathcal{L}_M$}

\begin{thm*}[Soundness]
  Any valid proof of an instantiation $q_1 \leadsto q_2 : u$ in $\mathcal{L}_M$
  induces a balanced path from $q_1$ to $q_2$ with weight $u$ in $M$.
\end{thm*}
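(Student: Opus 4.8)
The plan is to argue by structural induction on the proof tree, strengthening the statement slightly so the induction carries through cleanly. For a proof of $q_1 \leadsto q_2 : u$ with induced path $\pi$, I would establish three things at once: (i) $\pi$ is a well-formed path with $p[\pi] = q_1$ and $n[\pi] = q_2$; (ii) $c_{\Pi \cup \hat{\Pi}}[\pi]$ lies in the Dyck language, i.e. $\pi$ is balanced; and (iii) $w[\pi] = u$. Parts (ii) and (iii) are exactly what the theorem asserts, while (i) is the connectivity bookkeeping that makes each induction step well-defined. The induction is on the size of the proof tree, with one case for each way the final instantiation can be obtained: an axiom, a Scan, or a Complete.

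For the base case, an axiom has the form $q \leadsto q : \overline{1}$ and induces the empty path $\pi$. I would adopt the convention that the empty path runs from $q$ to $q$, is vacuously balanced (the empty string is in the Dyck language), and carries weight $\overline{1}$ as the empty $\otimes$-product, so (i)--(iii) hold. For the Scan step, the last rule derives $q \leadsto n[e] : u \otimes w[e]$ from $q \leadsto p[e] : u$ with $i[e] \in \Sigma \cup \{\epsilon\}$, and the induced path is $\pi = \pi' e$, where $\pi'$ is the path supplied by the induction hypothesis for the single premise. Connectivity holds because $n[\pi'] = p[e]$; balancedness is preserved because appending a transition whose label is not a parenthesis leaves $c_{\Pi \cup \hat{\Pi}}[\pi] = c_{\Pi \cup \hat{\Pi}}[\pi']$ unchanged; and the weight is $w[\pi'] \otimes w[e] = u \otimes w[e]$. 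These two cases are entirely routine.

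The substantive case is Complete, which derives $q \leadsto n[e_2] : u_1 \otimes w[e_1] \otimes u_2 \otimes w[e_2]$ from $q \leadsto p[e_1] : u_1$ and $n[e_1] \leadsto p[e_2] : u_2$, with $i[e_1] \in \Pi$, $i[e_2] \in \hat{\Pi}$ and $i[e_1] = \hat{i}[e_2]$. Reading the side conditions off in the prescribed left-to-right post-order, the induced path is $\pi = \pi_1\, e_1\, \pi_2\, e_2$, where $\pi_1$ and $\pi_2$ come from the induction hypothesis for the left and right premises; I would first confirm this is the order the traversal produces, matching the worked example whose path is $(7)(1)(4)(2)(3)(5)(6)(8)$. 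Connectivity then chains as $q \xrightarrow{\pi_1} p[e_1] \xrightarrow{e_1} n[e_1] \xrightarrow{\pi_2} p[e_2] \xrightarrow{e_2} n[e_2]$, and the weight multiplies to $u_1 \otimes w[e_1] \otimes u_2 \otimes w[e_2]$ by associativity of $\otimes$.

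The hard part will be balancedness in this Complete case, and it is the only step that genuinely uses properties of the Dyck language rather than bookkeeping. Here $c_{\Pi \cup \hat{\Pi}}[\pi] = c_{\Pi \cup \hat{\Pi}}[\pi_1]\, i[e_1]\, c_{\Pi \cup \hat{\Pi}}[\pi_2]\, i[e_2]$, with both $c_{\Pi \cup \hat{\Pi}}[\pi_1]$ and $c_{\Pi \cup \hat{\Pi}}[\pi_2]$ Dyck words by the induction hypothesis. I would invoke two standard closure properties of the Dyck language: that wrapping a Dyck word $\omega$ with a matched pair yields a Dyck word $i[e_1]\, \omega\, i[e_2]$ (using precisely that $i[e_1] \in \Pi$ opens, $i[e_2] \in \hat{\Pi}$ closes, and $i[e_1] = \hat{i}[e_2]$ makes them counterparts), and that the concatenation of two Dyck words is again a Dyck word. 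Applying the first to $\pi_2$ and then concatenating on the left with $c_{\Pi \cup \hat{\Pi}}[\pi_1]$ shows $\pi$ is balanced, closing the induction. The only real care needed is that the matching condition $i[e_1] = \hat{i}[e_2]$ guarantees correct nesting over a multi-symbol parenthesis alphabet, not merely equal counts of opens and closes.
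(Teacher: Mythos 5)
Your proof is correct and follows the same route as the paper: induction on the size of the proof tree with one case per last rule applied (axiom, Scan, Complete), tracking endpoints, balancedness, and weight, with the induced path in the Complete case being $\pi_1\, e_1\, \pi_2\, e_2$ exactly as you identify. You are in fact more careful than the paper in the Complete case, which it dismisses as ``similar to the Scan rule case''; your explicit appeal to the Dyck-language closure properties (wrapping a Dyck word in a matched pair of counterpart parentheses, then concatenating Dyck words) supplies precisely the detail the paper omits.
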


\begin{proof}
\begin{description}
\item[Base] An axiom of the form $q_1 \leadsto q_2 : u$ must have $q_1
  = q_2$ and $u = \overline{1}$. The yield of a proof using only the
  axiom is an empty path, thus a balanced path with weight
  $\overline{1}$.
\item[Induction] Assuming proofs with at most $n$ steps satisfy the
  above lemma. For any proof of $q_1 \leadsto q_2 : u$ in $n+1$ steps,
  \begin{itemize}
  \item If the last step uses the Scan rule, then it must be of the
    following form,
    \begin{prooftree}
      \AxiomC{$q_1 \leadsto q_3 : u_1$}
      \RightLabel{$q_3 \xrightarrow{a} q_2 : u_2$}
      \UnaryInfC{$q_1 \leadsto q_2 : u$}
    \end{prooftree}
    where $u_1 \otimes u_2 = u$, $q_3 \xrightarrow{a} q_2 : u_2 \in E$
    and $q_1 \leadsto q_3 : u_1$ is the outcome of some proof in at
    most $n$ steps. Let the induced path of the proof of $q_1 \leadsto
    q_3 : u_1$ be $\pi' = e_1 e_2 \ldots e_m$. The induced path of the
    whole proof is thus $\pi = e_1 e_2 \ldots e_m (q_3 \xrightarrow{a}
    q_2)$. By the induction hypothesis, $\pi'$ is a balanced path from
    $q_1$ to $q_3$ with weight $u_1$. As a result, $\pi$ is also
    balanced because $a \in \Sigma \cup \{ \epsilon \}$ by definition
    of the logic; its weight is $w[\pi] = w[\pi'] \otimes u_2 = u_1
    \otimes u_2 = u$.
  \item If the last step uses the Complete rule, then it must be of the
    following form,
    \begin{prooftree}
      \AxiomC{$q_1 \leadsto q_3 : u_1$}
      \AxiomC{$q_4 \leadsto q_5 : u_3$}
      \RightLabel{$q_3 \xrightarrow{a} q_4 : u_2$, $q_5 \xrightarrow{\hat{a}} q_2 : u_4$}
      \BinaryInfC{$q_1 \leadsto q_2 : u$}
    \end{prooftree}
    where $u_1 \otimes u_2 \otimes u_3 \otimes u_4 = u$, $a \in \Pi$
    is an opening parenthesis, and $\hat{a} \in \hat{\Pi}$ is the
    corresponding closing parenthesis. Similar to the Scan rule case,
    one can prove the induced path is a balanced path from $q_1$ to
    $q_2$ with weight $u$ using the associativity of $\otimes$.
  \end{itemize}
\end{description}
\end{proof}

\begin{thm*}[Completeness]
  Any balanced path from an entering state $q_1$ to some state $q_2$
  with weight $u$ in $M$ has a valid proof of an instantiation $q_1
  \leadsto q_2 : u$ in $\mathcal{L}_M$ whose induced path is that
  path.
\end{thm*}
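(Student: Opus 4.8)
The plan is to prove completeness by induction on the length of the balanced path $\pi$ from entering state $q_1$ to $q_2$, mirroring the structure of the soundness proof. The inductive quantity should be the number of transitions in $\pi$, and the induction hypothesis is that every balanced path of length at most $n$ starting from an entering state has the desired proof. The key structural fact I would exploit is that a balanced path can be canonically decomposed according to its \emph{last} top-level step, matching the Scan/Complete case split of the logic; this is exactly the decomposition already used implicitly in Section~\ref{sec:shortest} when $\alpha$ was expanded by ``the last step taken in a proof.''

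First I would handle the base case: the empty path has $q_1 = q_2$, and since $q_1$ is an entering state the axiom $q_1 \leadsto q_1 : \overline{1}$ is available, giving a one-node proof whose induced path is empty with weight $\overline{1}$. For the inductive step, consider a balanced path $\pi$ of length $n+1$ and look at its final transition $e_m$. If $i[e_m] \in \Sigma \cup \{\epsilon\}$, then dropping $e_m$ leaves a balanced path $\pi'$ from $q_1$ to $p[e_m]$ of length $n$ (balance is preserved since $e_m$ contributes no parenthesis); by the induction hypothesis $\pi'$ has a proof of $q_1 \leadsto p[e_m] : w[\pi']$, and one application of Scan with side condition $e_m$ completes the proof, with the induced path being exactly $\pi$ and the weight $w[\pi'] \otimes w[e_m] = w[\pi] = u$.

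The harder case is when $i[e_m] \in \hat{\Pi}$, i.e. $\pi$ ends in a closing parenthesis. Because $\pi$ is balanced, this closing parenthesis has a unique matching opening parenthesis in $c_{\Pi \cup \hat{\Pi}}[\pi]$; let $e_j$ be the transition carrying that matching open parenthesis, so $i[e_j] \in \Pi$ and $\hat{i}[e_j] = i[e_m]$. The matched-parenthesis structure of a Dyck word then splits $\pi$ as $\pi = \pi_1\, e_j\, \pi_2\, e_m$, where $\pi_1$ runs from $q_1$ to $p[e_j]$ and $\pi_2$ runs from $n[e_j]$ to $p[e_m]$, and crucially both $\pi_1$ and $\pi_2$ are themselves balanced (this is the defining property of the matching bracket: everything strictly between the matched pair is balanced, and so is everything before the opening bracket). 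Each of $\pi_1$ and $\pi_2$ is shorter than $\pi$, so the induction hypothesis applies --- but I must check the \emph{entering-state} precondition for the second antecedent: $n[e_j]$ is the target of a transition whose label $i[e_j] \in \Pi$, which is precisely the condition making $n[e_j]$ an entering state by the axiom clause of $\mathcal{L}_M$. This is the main obstacle, and it is exactly why the theorem quantifies over entering states rather than arbitrary states: it guarantees that the recursive subgoal $n[e_j] \leadsto p[e_m]$ is again a valid instance of the hypothesis.

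Given the two inductive proofs of $q_1 \leadsto p[e_j] : w[\pi_1]$ and $n[e_j] \leadsto p[e_m] : w[\pi_2]$, a single application of the Complete rule with side conditions $e_j$ and $e_m$ yields $q_1 \leadsto n[e_m] = q_2$ with weight $w[\pi_1] \otimes w[e_j] \otimes w[\pi_2] \otimes w[e_m]$, which equals $w[\pi] = u$ by associativity of $\otimes$; and reading off the side conditions in left-to-right post-order reconstructs $\pi_1\, e_j\, \pi_2\, e_m = \pi$ as the induced path, closing the induction. I expect the only genuinely delicate point to be a clean justification that the matching-bracket decomposition produces two balanced subpaths and that $\pi$ cannot end in an \emph{opening} parenthesis (which would contradict balance of $c_{\Pi\cup\hat\Pi}[\pi]$), so I would state that combinatorial fact about Dyck words explicitly before invoking it.
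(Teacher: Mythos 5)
Your proposal is correct and follows essentially the same route as the paper's own proof: induction on path length, with the Scan case for a non-parenthesis final transition and the matching-bracket decomposition $\pi = \pi_1\, e_j\, \pi_2\, e_m$ for a final closing parenthesis (the paper's $e_k$ balancing $e_{n+1}$), plus the observation that a balanced path cannot end in an opening parenthesis. You are in fact more explicit than the paper on the one delicate point --- that $n[e_j]$ is an entering state, so the induction hypothesis legitimately applies to $\pi_2$ --- which the paper leaves implicit in its ``similar to the above case'' remark.
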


\begin{proof}
  \begin{description}
  \item[Base] For any empty balanced path from a state $q$ such that
    $q \leadsto q : \overline{1}$ is an axiom, the proof is just the
    axiom itself.
  \item[Induction] Assuming all balanced paths from any entering state of
    at most length $n$ satisfy the above lemma. For any balanced path
    of length $n+1$ from an entering state $\pi = e_1 e_2 \ldots e_{n+1}$
    from $q_1$ to $q_2$ with weight $u$,
    \begin{itemize}
    \item If $e_{n+1}$ is $q_3 \xrightarrow{a} q_2 : u_2$ with $a \in
      \Sigma \cup \{ \epsilon \}$, then $\pi' = e_1 e_2 \ldots e_m$ is
      a balanced of length $n$ and $w[\pi'] \otimes u_2 = u$. By
      induction hypothesis, there exists a proof of $\pi'$ (via item
      $q_1 \leadsto n[e_m] : w[\pi']$). Applying the Scan rule then
      gives a proof of $q_1 \leadsto q_2 : u_1 \otimes u_2 = u$.
    \item If $e_{n+1}$ is $q_3 \xrightarrow{a} q_2 : u_2$ with $a \in
      \hat{\Pi}$, then there must be a $k \leq n$ such that $e_k$
      balances with $e_{n+1}$. Similar to the above case, one can
      prove the item by combining the proof of $e_1 \ldots e_{k-1}$
      and $e_{k+1} \ldots e_m$.
    \item If $e_{n+1}$ is $q_3 \xrightarrow{a} q_2 : u_2$ with $a \in
      \Pi$, the path cannot be balanced.
    \end{itemize}
  \end{description}
\end{proof}

\begin{thm*}[In-ambiguity]
  Any balanced path from an entering state in $M$ has a unique proof
  in $\mathcal{L}_M$.
\end{thm*}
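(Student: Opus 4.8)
The plan is to prove uniqueness by induction on the length $m$ of the balanced path $\pi = e_1 \cdots e_m$, mirroring the induction already used for Completeness. The entire argument reduces to one structural observation: for a balanced path from an entering state, the \emph{last} inference step of any proof inducing $\pi$ is completely determined by $\pi$ itself. Once this is established, the premises of that last step are themselves strictly shorter balanced paths from entering states, so the induction hypothesis forces their subproofs to be unique, and hence the entire proof is unique up to the tree structure with side conditions.

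For the base case $m = 0$, only an axiom can induce the empty path (a Scan appends one transition and a Complete appends at least two), so the proof is the single axiom $q_1 \leadsto q_1 : \overline{1}$. For $m \geq 1$ I would first argue that the label $i[e_m]$ of the last transition dictates the last rule. A balanced nonempty path cannot end in an opening parenthesis, since that parenthesis would be unmatched; hence $i[e_m] \in \Sigma \cup \{\epsilon\}$ or $i[e_m] \in \hat{\Pi}$. By the definition of the induced path (read off by left-to-right post-order traversal), a final Scan with side condition $e$ yields $\pi' e$, whereas a final Complete with parentheses $e_1 \in \Pi$, $e_2 \in \hat{\Pi}$ yields $\pi_L\, e_1\, \pi_R\, e_2$; so the last transition is the Scan symbol in the first case and the closing parenthesis in the second. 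Thus $i[e_m] \in \Sigma \cup \{\epsilon\}$ forces a final Scan and $i[e_m] \in \hat{\Pi}$ forces a final Complete. In the Scan case the decomposition $\pi = (e_1 \cdots e_{m-1})\, e_m$ is immediate, the side condition must be $e_m$, and $e_1 \cdots e_{m-1}$ is a shorter balanced path from the same entering state $q_1$, so the induction hypothesis applies directly.

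The only real obstacle is the Complete case, where I must show that the matching opening parenthesis $e_1$ and the split $\pi = \pi_L\, e_1\, \pi_R\, e_m$ are forced. Here I would invoke the uniqueness of parenthesis matching in a Dyck word: since $\pi$ is balanced, its parenthesis subsequence $c_{\Pi \cup \hat{\Pi}}[\pi]$ is a Dyck word, and its final closing parenthesis $e_m$ has a unique matching opener. Any valid final Complete must take $e_1$ to be a transition with $i[e_1] \in \Pi$ and $i[e_1] = \hat{i}[e_m]$ such that the interior $\pi_R$ is balanced; because non-parenthesis transitions do not affect balance, this holds precisely when $e_1$ is the Dyck-match of $e_m$, which is unique, pinning down both $e_1$ and the cut point. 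It then remains to confirm that the induction hypothesis applies to both premises: $\pi_L$ runs from the entering state $q_1$, while $\pi_R$ runs from $n[e_1]$, which is an entering state exactly because $e_1$ is an incoming opening-parenthesis transition into it---this is precisely the second clause of the axiom rule. Both subpaths are strictly shorter and balanced, so their proofs are unique by the hypothesis, and uniqueness of the full proof follows. I expect the Dyck-matching uniqueness, together with the check that $n[e_1]$ is indeed entering, to be the crux; everything else is bookkeeping parallel to the Completeness proof.
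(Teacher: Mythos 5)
Your proposal is correct and follows essentially the same route as the paper's own (sketched) proof: induction on path length, with the last transition's label forcing the last rule to be Scan or Complete, and Dyck-matching uniqueness pinning down the Complete decomposition. You actually supply more detail than the paper does---in particular the explicit check that $n[e_1]$ is an entering state so the induction hypothesis applies to the right premise---but the argument is the same one the paper sketches.
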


\begin{proof}
  This is very similar to the completeness proof, thus we only give a
  sketch of the proof. First note all empty paths from an entering
  state has a unique proof (if the start state happens to have an
  incoming open-parenthesis transition, we consider the two inducing
  the same axiom). For any longer paths, if the last transition has a
  label from $\Sigma \cup \{ \epsilon \}$ then the last step must be
  using the Scan rule with antecedents with unique proof and that
  particular transition as the side condition; otherwise the last
  transition must have a closing parenthesis, which means a unique
  application of the Complete rule.
\end{proof}

\end{document}